\newcommand{\calN}{\mathcal{N}}
\newcommand{\calO}{\mathcal{O}}
\newcommand{\calS}{\mathcal{S}}
\newcommand{\ba}{\mathbf{a}}
\newcommand{\bc}{\mathbf{c}}
\newcommand{\bg}{\mathbf{g}}
\newcommand{\bI}{\mathbf{I}}
\newcommand{\bx}{\mathbf{x}}
\newcommand{\rmd}{\mathrm{d}}
\DeclareMathAlphabet{\mathbsf}{OT1}{cmss}{bx}{n}
\DeclareMathAlphabet{\mathssf}{OT1}{cmss}{m}{sl}
\DeclareSymbolFont{bsfletters}{OT1}{cmss}{bx}{n}  
\DeclareSymbolFont{ssfletters}{OT1}{cmss}{m}{n}
\DeclareMathSymbol{\bsfGamma}{0}{bsfletters}{'000}
\DeclareMathSymbol{\ssfGamma}{0}{ssfletters}{'000}
\DeclareMathSymbol{\bsfDelta}{0}{bsfletters}{'001}
\DeclareMathSymbol{\ssfDelta}{0}{ssfletters}{'001}
\DeclareMathSymbol{\bsfTheta}{0}{bsfletters}{'002}
\DeclareMathSymbol{\ssfTheta}{0}{ssfletters}{'002}
\DeclareMathSymbol{\bsfLambda}{0}{bsfletters}{'003}
\DeclareMathSymbol{\ssfLambda}{0}{ssfletters}{'003}
\DeclareMathSymbol{\bsfXi}{0}{bsfletters}{'004}
\DeclareMathSymbol{\ssfXi}{0}{ssfletters}{'004}
\DeclareMathSymbol{\bsfPi}{0}{bsfletters}{'005}
\DeclareMathSymbol{\ssfPi}{0}{ssfletters}{'005}
\DeclareMathSymbol{\bsfSigma}{0}{bsfletters}{'006}
\DeclareMathSymbol{\ssfSigma}{0}{ssfletters}{'006}
\DeclareMathSymbol{\bsfUpsilon}{0}{bsfletters}{'007}
\DeclareMathSymbol{\ssfUpsilon}{0}{ssfletters}{'007}
\DeclareMathSymbol{\bsfPhi}{0}{bsfletters}{'010}
\DeclareMathSymbol{\ssfPhi}{0}{ssfletters}{'010}
\DeclareMathSymbol{\bsfPsi}{0}{bsfletters}{'011}
\DeclareMathSymbol{\ssfPsi}{0}{ssfletters}{'011}
\DeclareMathSymbol{\bsfOmega}{0}{bsfletters}{'012}
\DeclareMathSymbol{\ssfOmega}{0}{ssfletters}{'012}
\newtheorem{theorem}{Theorem} 
\newtheorem{lemma}[theorem]{Lemma}
\newenvironment{proof}[1][Proof]{\begin{trivlist}
\item[\hskip \labelsep {\bfseries #1}]}{\end{trivlist}}
\definecolor{cvprblue}{rgb}{0.21,0.49,0.74}
\title{Towards Accurate Guided Diffusion Sampling through  \\  Symplectic Adjoint Method}
\author{%
  Jiachun Pan$^*$ \\ 
    National University of Singapore\\ {\tt\small pan.jc@nus.edu.sg} \\
    \and
  Hanshu Yan$^*$\\
  ByteDance \\
  {\tt\small hanshu.yan@bytedance.com} \\
  \and
  Jun Hao Liew \\
  ByteDance \\
  {\tt\small junhao.liew@bytedance.com} \\
  \and
  Jiashi Feng \\
  ByteDance \\
  {\tt\small jshfeng@bytedance.com} \\
  \and
  Vincent Y. F. Tan \\
  National University of Singapore \\
  {\tt\small vtan@nus.edu.sg} \\
}
\begin{document}

\twocolumn[{%
\renewcommand\twocolumn[1][]{#1}%
\maketitle
\begin{center}
    \centering
    \captionsetup{type=figure}
    \includegraphics[width=\textwidth]{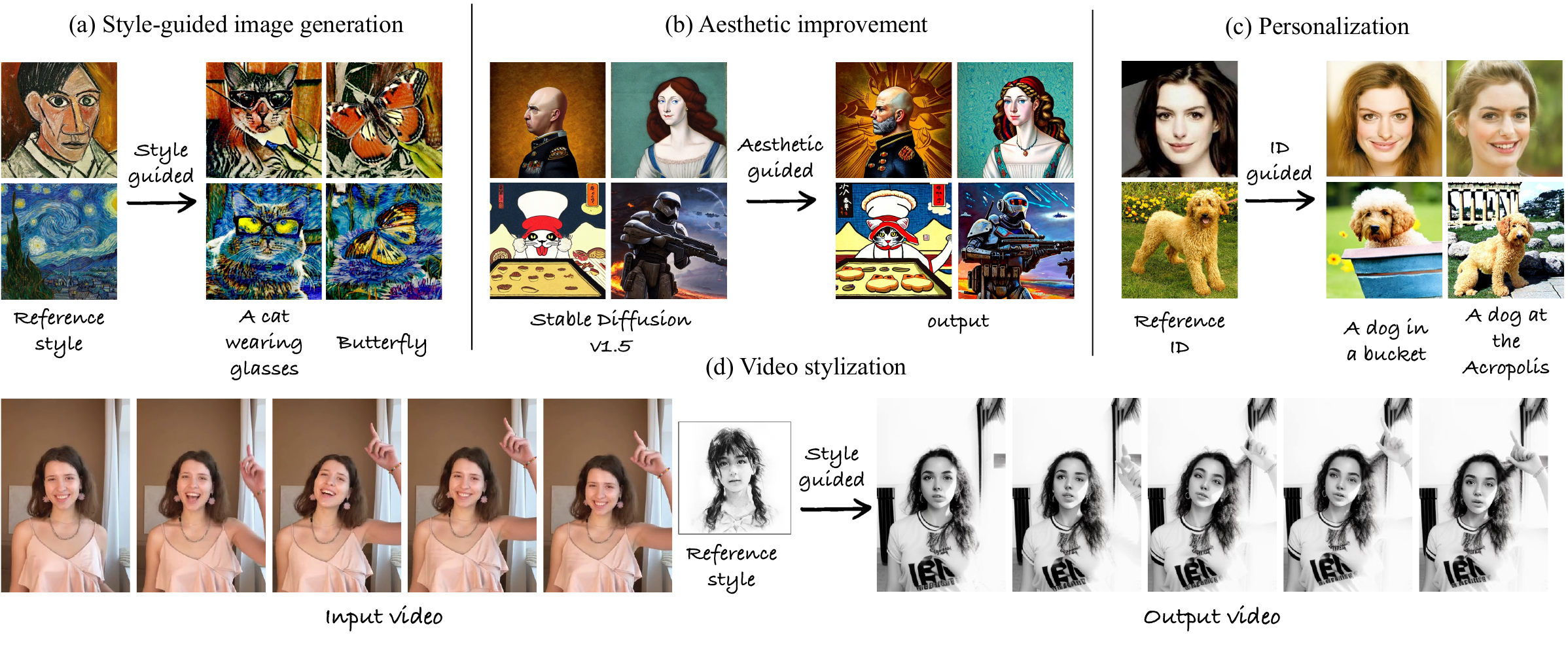}
    \captionof{figure}{We propose \textbf{Symplectic Adjoint Guidance}, a training-free guided diffusion process that supports various image and video generation tasks, including style-guided image generation, aesthetic improvement, personalization and video stylization.}
    \label{fig:intro}
\end{center}%
}]
\def\thefootnote{*}\footnotetext{Equal contribution.}\def\thefootnote{\arabic{footnote}}

\begin{abstract}
Training-free guided sampling in diffusion models leverages off-the-shelf pre-trained networks, such as an aesthetic evaluation model, to guide the generation process. Current training-free guided sampling algorithms obtain the guidance energy function based on a one-step estimate of the clean image. However, since the off-the-shelf pre-trained networks are trained on clean images, the one-step estimation procedure of the clean image may be inaccurate, especially in the early stages of the generation process in diffusion models. This causes the guidance in the early time steps to be inaccurate. To overcome this problem, we propose  Symplectic Adjoint Guidance (SAG), which calculates the gradient guidance in two inner stages. Firstly, SAG
estimates the clean image via $n$  function calls, where $n$ serves as a flexible hyperparameter that can be tailored to meet specific image quality requirements. Secondly, SAG uses the symplectic adjoint method to obtain the gradients accurately and efficiently in terms of the memory requirements.  Extensive experiments demonstrate that SAG generates images with higher qualities compared to the baselines in both guided image and video generation tasks. Code is
available at \url{https://github.com/HanshuYAN/AdjointDPM.git}
\end{abstract}    
\vspace{-0.5em}
\section{Introduction}
\label{sec:intro}

\begin{figure*}
    \centering
    \includegraphics[width=0.75\textwidth]{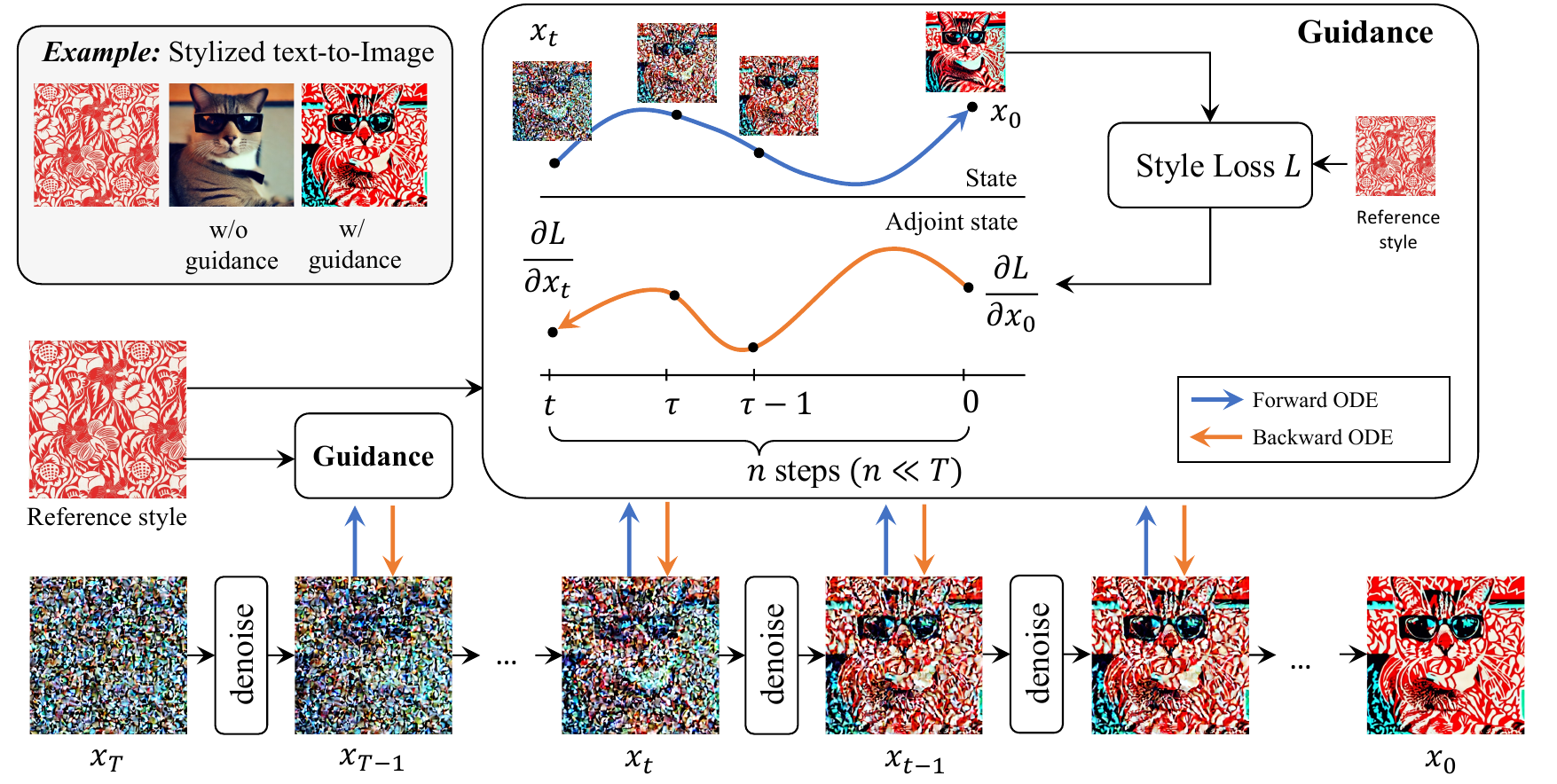}
    \caption{Symplectic Adjoint Training-free Guidance Generation. We illustrate the framework of training-free guided generation through symplectic adjoint guidance using a stylization example. When we denoise  Gaussian noise to an image across various steps, we can add guidance (usually defined as gradients of loss function on the estimate of $\hat{\bx}_0$ based on $\bx_t$) to each step. Different from previous works~\cite{Yu2023freedom,Bansal_2023_CVPR} which approximate $\hat{\bx}_0$ based on $\bx_t$ using one step, we estimate $\hat{\bx}_0$ using $n$ steps $(n\ll T)$ by solving a forward ODE. Then we use the symplectic adjoint method to solve a backward ODE to obtain the gradients. These gradients guide the diffusion generation process to be closer to the reference image.}
    \label{fig:intro}
    \vspace{-0.5em}
\end{figure*}


Diffusion models are powerful generative models that exhibit impressive performances across different modality generation, including image~\cite{dhariwal2021diffusion, ho_imagen_2022,ho_classifier-free_2022}, video~\cite{molad_dreamix_2023,wu_tune--video_2023,zhou_magicvideo_2022} and audio generation~\cite{pmlr-v202-liu23f}. Guided sampling, including classifier guidance~\cite{dhariwal2021diffusion} and classifier-free guidance~\cite{ho_classifier-free_2022}, has been widely used in diffusion models to realize controllable generation, such as text-to-image generation~\cite{saharia2022photorealistic}, image-to-image generation~\cite{saharia2022palette,parmar2023zero}, and ControlNet~\cite{zhang2023adding}. Guided sampling controls the outputs of generative models by conditioning on various types of signals, such as descriptive text, class labels, and images. 

A line of guidance methods involves task-specific training of diffusion models using paired data, \ie, targets and conditions. For instance, classifier guidance~\cite{dhariwal2021diffusion} combines the score estimation of diffusion models with the gradients of the image classifiers to direct the generation process to produce images corresponding to a particular class. In this way, several image classifiers need to be trained on the noisy states of intermediate generation steps of diffusion models. Alternatively, classifier-free guidance~\cite{ho_classifier-free_2022} directly trains a new score estimator with conditions and uses a linear combination of conditional and unconditional score estimators for sampling. Although this line of methods can effectively guide diffusion models to generate data satisfying certain properties, they are not sufficiently flexible to adapt to any type of guiding due to the cost of training and the feasibility of collecting paired data.

To this end, another line of training-free guidance methods has been explored~\cite{Bansal_2023_CVPR, Yu2023freedom,li2022upainting}.  In training-free guided sampling, at a certain sampling step $t$, the guidance function is usually constructed as the gradients of the loss function obtained by the off-the-shelf pre-trained models, such as face-ID detection or aesthetic evaluation models. More specifically, 
the guidance gradients are computed based on the one-step approximation of denoised images from the noisy samples at certain steps $t$. Then,  gradients are added to corresponding sampling steps as guidance to direct the generation process to the desired results. This line of methods offers greater flexibility by allowing the diffusion models to adapt to a broad spectrum of guidance.
However, at certain time steps with guidance, the generated result at the end is usually misaligned with its one-step denoising approximation, which may lead to inaccurate guidance. The misalignment is notably pronounced in the early steps of the generation process, as the noised samples are far from the finally generated result. For example, in face ID-guided generation, when the final approximation is blurry and passed to pre-trained face detection models, we cannot obtain accurate ID features, which leads to inaccuracies in guidance to the desired faces.


To mitigate the misalignment issue of existing training-free guidance, we propose a novel guidance algorithm, termed Sympletic Adjoint Guidance (SAG). As shown in Figure~\ref{fig:intro}, SAG estimates the finally generated results by \textbf{$\textbf{n}$-step} denoising. Multiple-step estimation yields more accurate generated samples, but this also introduces another challenge in backpropagating the gradients from the output to each intermediate sampling step. Because the execution of the vanilla backpropagation step requires storing all the intermediate states of the $n$ iterations, the memory cost is prohibitive.   
To tackle this challenge, SAG applies the \textbf{symplectic adjoint method}, an adjoint method solved by a symplectic integrator~\cite{matsubara2021symplectic}, which can backpropagate the gradients accurately and is memory efficient. 
In summary, our contributions are as follows:
\begin{itemize}
    \item We propose to use an $n$-step estimate of the final generation to calculate the gradient guidance. This mitigates the misalignment between the final outputs and their estimates, which provides more accurate guidance from off-the-shelf pre-trained models. 
    \item To backpropagate gradients throughout the $n$-step estimate, we introduce the theoretically grounded symplectic adjoint method to obtain accurate gradients. This method is also memory efficient, which is beneficial to guided sampling in large models, such as Stable Diffusion. 
    \item Thanks to accurate guidance, SAG can obtain high-quality results in various guided image and video generation tasks.
\end{itemize}
\section{Background}
\label{sec:back}

\subsection{Guided Generation in Diffusion Models}
\label{subsec:tfree}
\vspace{-0.3em}
\paragraph{Diffusion Models} Diffusion generative models gradually add Gaussian noise to complex data distributions to transform them into a simple Gaussian distribution and then solve the reverse process to generate new samples. The forward noising process and reverse denoising process can both be modeled as SDE and ODE forms~\cite{song2020score}. 
In this paper, we mainly consider the ODE form of diffusion models as it is a deterministic method for fast sampling of diffusion models. An example for discrete deterministic sampling (solving an ODE) is  DDIM~\cite{song_denoising_2022}, which has the form:
\begin{equation}
\label{eqn:ddim}
\setlength\abovedisplayskip{6pt}
\setlength\belowdisplayskip{6pt}
    \bx_{t-1} = \sqrt{\alpha_{t-1}} \hat{\bx}_0 + \sqrt{1-\alpha_{t-1}} \epsilon_{\theta}(\bx_t,t),
\end{equation}
where $\alpha_t$ is a schedule that controls the degree of diffusion at each time step, $\epsilon_{\theta}(\bx_t,t)$ is a network that predicts noise, and $\hat{\bx}_0$ is an estimate of the clean image:
\begin{equation}
\label{eqn:estx0}
\setlength\abovedisplayskip{6pt}
\setlength\belowdisplayskip{6pt}
\hat{\bx}_0=\frac{ \bx_t-\sqrt{1-\alpha_t}\epsilon_{\theta}(\bx_t,t)}{\sqrt{\alpha_t}}.
\end{equation}
The DDIM can be regarded as the discretization of an ODE. By multiplying both sides of~\eqref{eqn:ddim} with $\sqrt{{1}/{\alpha_{t-1}}}$, we have
\begin{align*}
    \frac{\bx_{t-1}}{\sqrt{\alpha_{t-1}}} = \frac{\bx_{t}}{\sqrt{\alpha_{t}}} +\epsilon_{\theta}(\bx_t,t) \left(\frac{\sqrt{1-\alpha_{t-1}}}{\sqrt{\alpha_{t-1}}}-\frac{\sqrt{1-\alpha_t}}{\sqrt{\alpha_t}}\right).
\end{align*}
We  can parameterize $\sigma_t = \sqrt{1-\alpha_t}/\sqrt{\alpha_t}$ as $\sigma_t$ is monotone in  $t$~\cite{song_denoising_2022} and $\bar{\bx}_{\sigma_t}=\bx_t/\sqrt{\alpha_t}$. 
Then when $\sigma_{t-1}-\sigma_t\to 0$, we obtain the ODE form of DDIM: 
\begin{equation}
\setlength\abovedisplayskip{6pt}
\setlength\belowdisplayskip{6pt}
\rmd \bar{\bx}_{\sigma_t}= \bar{\epsilon} (\bar{\bx}_{\sigma_t},\sigma_t)\rmd \sigma_t\label{eqn:ode1},
\end{equation}
where $\bar{\epsilon} (\bar{\bx}_{\sigma_t},\sigma_t)=\epsilon_{\theta}(\bx_t,t)$. Using ODE forms makes it possible to use numerical methods to accelerate the sampling process~\cite{lu2022dpm2}.
\vspace{-0.5em}
\paragraph{Guided Generation}
Guided sampling in diffusion models can roughly be divided into two categories: training-required and training-free.
Training-required models~\cite{ho_classifier-free_2022,song_denoising_2022,dhariwal2021diffusion,rombach_high-resolution_2022} are usually well-trained on paired data of images and guidance, leading to strong guidance power in the diffusion sampling process. However, they lack flexibility in adapting  to a variety of guidances.

In this work, we mainly focus on the training-free guided sampling in diffusion models~\cite{meng2021sdedit, parmar2023zero, Bansal_2023_CVPR, hertz2022prompt, Yu2023freedom}.
Training-free guided sampling methods, such as FreeDOM~\cite{Yu2023freedom} and Universal Guidance (UG)~\cite{Bansal_2023_CVPR} leverage the off-the-shelf pre-trained networks to guide the generation process. To generate samples given some conditions $\bc$, a guidance function is added to the diffusion ODE function:
\begin{equation}
\setlength\abovedisplayskip{6pt}
\setlength\belowdisplayskip{6pt}
    \frac{\rmd \bar{\bx}_{\sigma_t}}{\rmd \sigma_t} = \bar{\epsilon} (\bar{\bx}_{\sigma_t}, \sigma_t) + \rho_{\sigma_t}g(\bar{\bx}_{\sigma_t},\bc,\sigma_t),
\end{equation}
where $\rho_{\sigma_t}$ is the parameter that controls the strength of guidance and $g(\bar{\bx}_{\sigma_t},\bc,\sigma_t)$ is usually taken as the negative gradients of loss functions $-\nabla_{\bar{\bx}_{\sigma_t}}L(\bar{\bx}_{\sigma_t},\bc)$~\cite{Yu2023freedom,Bansal_2023_CVPR} obtained by the off-the-shelf networks. For example, in the stylization task, $L$ could be the style loss between $\bar{\bx}_{\sigma_t}$ and the style images. As the off-the-shelf networks are trained on clean data, directly using them to obtain the loss function of noisy data $\bar{\bx}_{\sigma_t}$ is improper. To address this problem, they approximate $\nabla_{\bar{\bx}_{\sigma_t}}L(\bar{\bx}_{\sigma_t},\bc)$ using $\nabla_{\bar{\bx}_{\sigma_t}}L(\hat{\bx}_{0}(\bar{\bx}_{\sigma_t},\sigma_t),\bc)$, where $\hat{\bx}_{0}(\bar{\bx}_{\sigma_t},\sigma_t)$ is an estimate of the clean image shown in~\eqref{eqn:estx0}. Besides using the above gradients as guidance, another technique called backward universal guidance is introduced in UG, which is used to enforce the generated image to satisfy the guidance. In this work, we do not use this technique as our method could already obtain high-quality generated results. 

Besides, directly adding guidance functions to standard generation pipelines may cause artifacts and deviations from the conditional controls. To mitigate this problem, the time-travel strategy in FreeDOM (or self-recurrence in UG) is applied. Specifically, after $\bx_{t-1}$ is sampled, we further add random Gaussian noise to $\bx_{t-1}$ and repeat this denoising and noising process for $r$ times before moving to the next sampling step.

\vspace{-0.3em}
\subsection{Adjoint Sensitivity Method}
\label{subsec:adjoint}
\vspace{-0.3em}
The outputs of neural ODE models involve multiple iterations of the function call, which introduces the challenge of backpropagating gradients to inputs and model weights because the vanilla gradient-backpropagation requires storing all the intermediate states and leads to extremely large memory consumption.
To solve this, \citet{chen2018neural} proposed the adjoint sensitivity method, in which adjoint states, $\ba_t = \frac{\partial L}{\partial \bx_t}$, are introduced to represent the gradients of the loss with respect to intermediate states. The adjoint method defines an augmented state as the pair of the system state $\bx_t$ and the adjoint variable $\ba_t$, and integrates the augmented state backward in time. 
The backward integration of $\ba_t$ works as the gradient backpropagation in continuous time.

To obtain the gradients of loss w.r.t intermediate states in diffusion models, there are some existing works. DOODL~\cite{wallace2023endtoend} obtain the gradients of loss w.r.t noise vectors by using invertible neural networks~\cite{ardizzone2018analyzing}. DOODL relies on the invertibility of EDICT~\cite{wallace2023edict}, resulting in identical computation steps for both the backward gradient calculation and the forward sampling process. Here in our work, the $n$-step estimate could be flexible in the choice of $n$. FlowGrad~\cite{Liu_2023_CVPR} efficiently backpropagates the output to any intermediate time steps on the ODE trajectory, by decomposing the backpropagation and computing vector Jacobian products. FlowGrad needs to store the intermediate results, which may not be memory efficient. Moreover, the adjoint sensitivity method has been applied in diffusion models to finetune diffusion parameters for customization~\cite{pan2023adjointdpm} as it aids in obtaining the gradients of different parameters with efficient memory consumption. Specifically, in diffusion models (here we use the ODE form~\eqref{eqn:ode1}), we  first solve ODE~\eqref{eqn:ode1} from $T$ to $0$ to generate images, and then solve another backward ODE from $0$ to $T$ to obtain the gradients with respect to any intermediate state $\bar{\bx}_{\sigma_t}$, 
where the backward ODE~\cite{chen2018neural} has the following form:
\begin{equation}
\label{eqn:symp}
\setlength\abovedisplayskip{6pt}
\setlength\belowdisplayskip{6pt}
    \rmd \left[\begin{matrix}
        \bar{\bx}_{\sigma_t}\\
        \frac{\partial L}{\partial \bar{\bx}_{\sigma_t}}
    \end{matrix}\right] = \left[\begin{matrix}
        \bar{\epsilon} (\bar{\bx}_{\sigma_t},\sigma_t)\\
        -\left(\frac{\partial \bar{\epsilon} (\bar{\bx}_{\sigma_t},\sigma_t)}{\partial \bar{\bx}_{\sigma_t}} \right)^T\frac{\partial L}{\partial \bar{\bx}_{\sigma_t}}
    \end{matrix}\right] {\rmd \sigma_t}.
\end{equation}
After obtaining the gradients $\frac{\partial L}{\partial \bar{\bx}_{\sigma_t}}$, we naturally have $\frac{\partial L}{\partial \bx_t}=\frac{1}{\sqrt{\alpha_t}} \frac{\partial L}{\partial \bar{\bx}_{\sigma_t}}$ based on the definition of $\bar{\bx}_{\sigma_t}$.
Different from~\cite{pan2023adjointdpm}, this paper mainly focuses on flexible training-free guidance exploiting various types of information from pre-trained models. However, the vanilla adjoint method suffers from numerical errors. To reduce the numerical errors, backward integration often requires a smaller step size and leads to high computational costs.  In this work, we utilize the symplectic adjoint method~\cite{matsubara2021symplectic} to reduce the discretization error in solving the backward ODE.




\section{Methods}
\label{sec:meth}

Existing training-free guidance methods usually construct the gradient guidance through a one-step estimate of the clean image $\hat{\bx}_0$, which, however, is usually misaligned with the finally generated clean image. Such misalignment worsens in the early stage of the sampling process where noised samples $\bar{\bx}_{\sigma_t}$ are far from the final outputs. As a result, the guidance is often inaccurate. 
To mitigate the misalignment of the one-step estimate of $\hat{\bx}_0$, we propose Symplectic Adjoint Guidance (SAG) to accurately estimate the finally generated content and provide exact gradient-based guidance for better generation quality. We first consider estimating the clean image $\hat{\bx}_0$  using $n$ steps from $\bx_t$. Nevertheless, when estimating $\hat{\bx}_0$ using $n$ steps from $\bx_t$, how to accurately obtain the gradients $\nabla_{\bar{\bx}_{\sigma_t}}L(\hat{\bx}_{0},\bc)$ is non-trivial. We utilize the symplectic adjoint method, which can obtain accurate gradients with efficient memory consumption. The overall pipeline is shown in Fig.~\ref{fig:intro} and the explicit algorithm is shown in Algorithm~\ref{alg:sag}.

\subsection{Multiple-step Estimation of Clean Outputs}
\label{subsec:nstep}

As discussed in section~\ref{subsec:tfree}, training-free guidance methods usually approximate $\nabla_{\bar{\bx}_{\sigma_t}}L(\bar{\bx}_{\sigma_t},\bc)$ using $\nabla_{\bar{\bx}_{\sigma_t}}L(\hat{\bx}_{0}(\bar{\bx}_{\sigma_t},\sigma_t),\bc)$. According to~\cite[Theorem 1]{chung2022diffusion}, the approximation error is upper bounded by two terms: the first is related to the norm of gradients, and the second is related to the average estimation error of the clean image, i.e., $m=\int \|\bx_0 - \hat{\bx}_0\|p(\bx_0|\bx_t)\, \rmd \bx_0$. To reduce the gradient estimation error, we consider reducing the estimation error of the clean image (i.e., the misalignment between the one-step estimate and the final generated clean image) by using the $n$ step estimate.

Suppose the standard sampling process generates clean outputs for $T$ steps, from which we sample a subset of steps for implementing guidance. The subset for guidance can be indicated via a sequence of boolean values, $\bg^{T:1}=[g_T, g_{T-1}, \cdots, g_1]$. For a certain step $t$ of guidance, we consider predicting the clean image by solving the ODE functions~\eqref{eqn:ode1} in $n$ time steps. Here we usually set $n$ to be much smaller than $T$ for time efficiency (refer to section~\ref{sec:ablation} for details). Here, note that we denote the state of the sub-process for predicting clean outputs as {\color{cvprblue}\textbf{$\bx'_t$ and $\bar{\bx}'_{\sigma}$}} so as to distinguish from the notation {\color{cvprblue}\textbf{$\bx_t$ and $\bar{\bx}_{\sigma}$}} used in the main sampling process. 
Taking solving~\eqref{eqn:ode1} using the Euler numerical solver~\cite{epperson2021introduction} as an example (when the standard generation process is at step $t$), the estimate of the clean image $\bx'_{0}$ can be solved iteratively by \eqref{eqn:forward}, where $\tau=n,\ldots,1$ and the initial state of this sub-process $\bx'_{n}=\bx_t$. 
\begin{equation}
\label{eqn:forward}
\setlength\abovedisplayskip{6pt}
\setlength\belowdisplayskip{6pt}
    \!\frac{\bx'_{\tau\!-\!1}}{\sqrt{\alpha_{\tau\!-\!1}}} \!=\! \frac{\bx'_{\tau}}{\sqrt{\alpha_{\tau}}}\! +\! \epsilon_{\theta}(\bx'_{\tau}, \tau) \left(\!\sqrt{\frac{1\!-\!\alpha_{\tau\!-\!1}}{\alpha_{\tau\!-\!1}}}\!-\!\sqrt{\frac{1\!-\!\alpha_{\tau}}{\alpha_{\tau}}}\!\right)
\end{equation}

For a special case that $n=1$ and $\sqrt{\alpha_0}=1$~\cite{song_denoising_2022}, we have
$
   \bx'_{0} = \frac{\bx_{t}}{\sqrt{\alpha_{t}}} - \epsilon_{\theta}(\bx_{t}, t) \sqrt{\frac{1-\alpha_{t}}{\alpha_{t}}}
$,
which is equivalent to~\eqref{eqn:estx0}. Thus,  our method particularizes to FreeDOM~\cite{Yu2023freedom} when $n=1$. Denote $m(n)$ to be the average estimation error in $n$ estimate steps.  In the following lemma, we show that $m$ will not increase when we use $n$-step estimation.
The proof of Lemma~\ref{lem:est} is shown in the Appendix~\ref{subapp:lem}.
\begin{lemma}
\label{lem:est}
    $m(n_1)\leq m(n_2)$ when $n_2\leq n_1$. 
\end{lemma}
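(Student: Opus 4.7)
The plan is to argue that increasing the number of estimation steps $n$ in the ODE-based estimate of $\hat{\bx}_0$ yields a smaller average denoising error, by exploiting the fact that the intermediate state reached after $n_1 - n_2$ Euler steps lies on a lower noise level than $\bx_t$, and the optimal denoising error is monotone in the noise level. Concretely, I would first unfold the $n_1$-step estimator as a composition: starting from $\bx'_{n_1} = \bx_t$, apply $n_1 - n_2$ iterations of the ODE step in \eqref{eqn:forward} to reach an intermediate sample $\bx'_{n_2}$ sitting at a smaller noise scale $\sigma_{\tau^*}$ with $\tau^* < t$, and then observe that the remaining $n_2$ iterations are exactly the $n_2$-step estimator applied to $\bx'_{n_2}$. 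The lemma then reduces to comparing the expected denoising error of an $n_2$-step estimator starting from $\bx'_{n_2}$ against that of an $n_2$-step estimator starting from $\bx_t$.

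Next, I would invoke the standard property that the conditional law $p(\bx_0 \mid \bx_s)$ contracts as $s$ decreases: by the tower property of conditional expectations and Jensen's inequality applied to the (convex) norm $\|\cdot\|$, for any measurable estimator $\hat{\bx}_0(\cdot)$ the quantity $\int \|\bx_0 - \hat{\bx}_0(\bx_s)\| \, p(\bx_0 \mid \bx_s) \, \rmd \bx_0$ is non-increasing in $s$ whenever $\bx_s$ is produced from $\bx_t$ by a transition consistent with the diffusion process. The forward Euler update in \eqref{eqn:forward} is a first-order consistent discretization of the probability flow ODE, so in the population setting (exact score) the intermediate $\bx'_{n_2}$ is distributionally aligned with the true state at noise level $\sigma_{\tau^*}$, allowing the contraction to apply.

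The final step is to translate this into the stated inequality $m(n_1) \leq m(n_2)$. Because the $n_2$-step estimator at $\bx'_{n_2}$ and the $n_2$-step estimator at $\bx_t$ share the same functional form but act on inputs at different noise levels, the monotonicity above gives directly that the expected norm of the residual is smaller for the input at the lower noise level. Chaining this with the base case $n_2 = 1$ (which reduces to the ordinary Tweedie/one-step bound already used in~\cite{chung2022diffusion}) yields the claimed ordering, and iterating the same argument across all refinements from $n_2$ up to $n_1$ gives the full statement.

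The main obstacle I expect is not the monotonicity principle itself but controlling the discretization error of the Euler step: the contraction argument is exact only in the continuous-time limit, so to obtain a clean inequality one must either (i) assume the idealized setting where the learned score matches $\nabla \log p_t$ and the Euler step is treated in its ODE limit, or (ii) carry an additional error term of order $O(\Delta \sigma)$ through the bound and absorb it. I would proceed under assumption (i) to keep the statement clean, flag the discretization residual explicitly, and relegate the quantitative handling of that residual to a remark, since the qualitative monotonicity in $n$ is what the main text relies on.
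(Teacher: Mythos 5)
Your approach diverges from the paper's and contains a genuine gap. The paper's argument (Appendix~\ref{subapp:lem}) is purely a numerical-analysis one: the $n$-step estimate is the Euler discretization of the probability-flow ODE~\eqref{eqn:ode1} started at $\bx_t$, its deviation from the exact ODE solution is governed by the step size $h_\sigma=\calO(\sigma_t/n)$, and so a larger $n$ drives $\hat{\bx}_0$ closer to the limiting solution, shrinking $m$. No probabilistic contraction is invoked, and the averaging measure $p(\bx_0\mid\bx_t)$ is held fixed throughout.

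The gap in your route is the contraction step. The quantity $m$ is defined as an average against the \emph{fixed} posterior $p(\bx_0\mid\bx_t)$; it is never re-conditioned on an intermediate state. Moreover, the intermediate state $\bx'_{n_2}$ is a \emph{deterministic} function of $\bx_t$ (an Euler/ODE pushforward), so conditioning on it is equivalent to conditioning on $\bx_t$ itself: even in the idealized exact-score limit, the probability-flow map $\Phi_{t\to s}$ preserves the marginals $p_s$ but is a bijection, hence $p(\bx_0\mid \Phi_{t\to s}(\bx_t)=\by)=p(\bx_0\mid \bx_t=\Phi_{s\to t}(\by))$ and no posterior contraction occurs --- the ``distributional alignment'' you appeal to holds only for marginals, not for the conditionals given $\bx_0$. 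The monotonicity-in-noise-level you invoke is a statement about the \emph{Bayes-optimal} risk when the observation is genuinely drawn from the forward diffusion at level $s$ (a data-processing/tower-property fact); it does not transfer to a fixed deterministic estimator evaluated against the unchanged measure $p(\bx_0\mid\bx_t)$, and the ODE-based $n_2$-step estimator is not the Bayes estimator. So the key reduction ``error of the $n_2$-step estimator started at a lower noise level $\le$ error started at $\bx_t$'' is unsupported. The repair is the paper's simpler route: write $\|\bx_0-\hat{\bx}_0(n)\|\le\|\bx_0-\bx_0^{\mathrm{ode}}\|+\|\bx_0^{\mathrm{ode}}-\hat{\bx}_0(n)\|$ with $\bx_0^{\mathrm{ode}}$ the exact ODE solution from $\bx_t$, integrate against $p(\bx_0\mid\bx_t)$, and use the $\calO(1/n)$ global error of the Euler solver --- noting that, strictly speaking, even this establishes monotonicity of an upper bound on $m(n)$ rather than of $m(n)$ itself.
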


\subsection{Symplectic Adjoint Method}
\label{subsec:sam}
\begin{figure}
    \centering
    \includegraphics[width=0.8\linewidth]{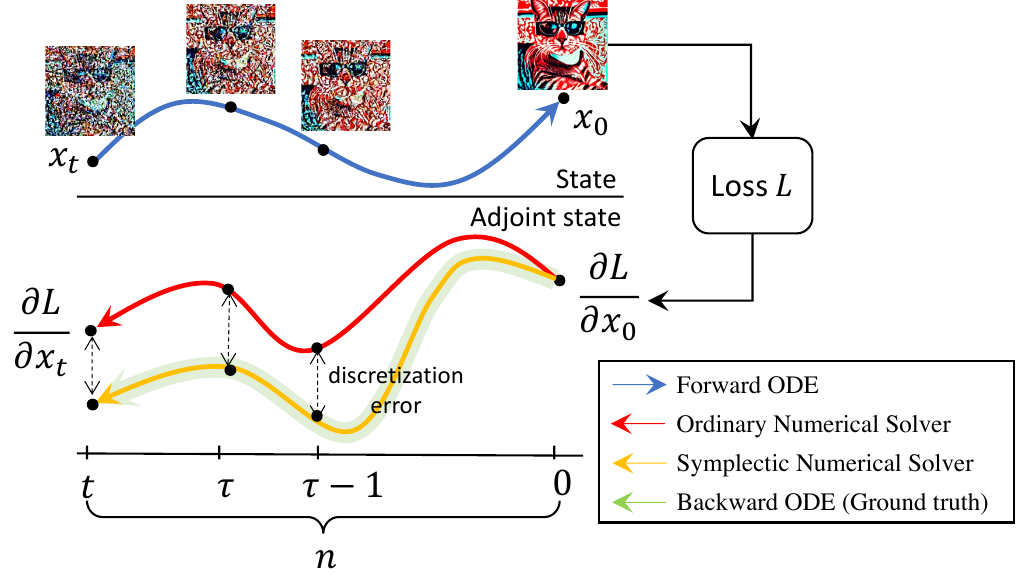}
    \vspace{-0.5em}
    \caption{Illustration of the Symplectic Adjoint method}
    \label{fig:comp}
    \vspace{-1em}
\end{figure}

In section~\ref{subsec:nstep}, we show how to get a more accurate estimation $\bx'_{0}$ of the final output by solving ODE functions~\eqref{eqn:ode1} in $n$ steps. As introduced in section~\ref{subsec:adjoint}, the adjoint method is a memory-efficient way to obtain the gradients $\frac{\partial L}{\partial \bx_t}$ through solving a backward ODE~\eqref{eqn:symp}. However, as our $n$ is set to be much smaller than $T$ and we usually set it to be 4 or 5 in our experiments, using the vanilla adjoint method will suffer from discretization errors. Thus, instead of using the vanilla adjoint method, we consider obtaining the accurate gradient guidance $\nabla_{\bx_t}L(\bx'_{0},\bc)$ using Symplectic Adjoint method~\cite{feng2010symplectic,matsubara2021symplectic}.
Here we present the first-order symplectic Euler solver~\cite{feng2010symplectic} as an example to solve~\eqref{eqn:symp} from $0$ to $n$ to obtain accurate gradients. We also can extend it to high-order symplectic solvers, such as  Symplectic Runge–Kutta Method~\cite{matsubara2021symplectic} for further efficiency in solving (refer to Appendix~\ref{subapp:highorder}).

Suppose we are implementing guidance at time step $t$, the forward estimation sub-process is discretized into $n$ steps. Let $\tau \in[n,\ldots,0]$ denote the discrete steps corresponding to time from $t$ to $0$ and $\sigma_{\tau}=\sqrt{1-\alpha_{\tau}}/\sqrt{\alpha_{\tau}}$. 
The forward estimate follows the forward update rule~\eqref{eqn:forward}, whose continuous form equals ODE~\eqref{eqn:ode1}.
Then, the Symplectic Euler update rule for solving the corresponding backward ODE~\eqref{eqn:symp} is:
\begin{align}
    \bar{\bx}'_{\sigma_{\tau+1}}& 
    = \bar{\bx}'_{\sigma_{\tau}} 
    + h_{\sigma_{\tau}} \bar{\epsilon}(\bar{\bx}'_{\sigma_{\tau+1}}, \sigma_{\tau+1})\label{eqn:sympup1},\\
    \frac{\partial L } {\partial \bar{\bx}'_{\sigma_{\tau+1}}} & \!
    = \frac{\partial L } {\partial \bar{\bx}'_{\sigma_{\tau}}}\! 
    - \!h_{\sigma_{\tau}}\left(\frac{\partial \bar{\epsilon} (\bar{\bx}'_{\sigma_{\tau+1}}, \sigma_{\tau+1})}{\partial \bar{\bx}'} \right)^T \frac{\partial L } {\partial \bar{\bx}'_{\sigma_{\tau}}}\label{eqn:sympup2},
\end{align}
for $\tau=0, 1, \ldots, n-1$. $h_{\sigma}$ is the discretization step size. After we obtain $\frac{\partial L } {\partial \bar{\bx}'_{\sigma_{n}}}$, $\frac{\partial L } {\partial \bx_t}$ is easily computed by $\frac{\partial L}{\partial \bar{\bx}'_{\sigma_n}}\cdot \frac{1}{\sqrt{\alpha_t}}$ based on the definition of $\bar{\bx}_{\sigma_t}$.

Note that \textit{different from the vanilla adjoint sensitivity method}, which uses $\bar{\epsilon}({\color{cvprblue}\bar{\bx}'_{\sigma_{\tau}}}, \sigma_{\tau})$ to update $\bar{\bx}'_{\sigma_{\tau+1}}$ and $\frac{\partial L } {\partial \bar{\bx}'_{\sigma_{\tau+1}}}$, the proposed symplectic solver uses $\bar{\epsilon}({\color{cvprblue}\bar{\bx}'_{\sigma_{\tau+1}}}, \sigma_{\tau+1})$. The values of $\bar{\bx}'_{\sigma_{\tau+1}}$ are restored from those that have been computed during the forward estimation. 
In Theorem~\ref{thm:symp}, we prove that the gradients obtained by the Symplectic Euler are accurate. 
Due to the limits of space, the complete statement and proof of Theorem~\ref{thm:symp} are presented in  Appendix~\ref{subapp:them}.  We illustrate the difference between the vanilla adjoint method and the symplectic adjoint method 
in Fig.~\ref{fig:comp}.
\vspace{-1em}
\begin{theorem} (Informal)
\label{thm:symp}
    Let the gradient $\frac{\partial L}{\partial \bar{\bx}'_{\sigma_t}}$  be the analytical solution to the continuous ODE in~\eqref{eqn:symp} 
    and let $\frac{\partial L}{\partial \bar{\bx}'_{\sigma_{n}}}$ be the gradient obtained by the symplectic Euler solver in ~\eqref{eqn:sympup2} throughout the discrete sampling process. Then, under some regularity conditions, we have 
    $\frac{\partial L}{\partial \bar{\bx}'_{\sigma_t}} = \frac{\partial L}{\partial \bar{\bx}'_{\sigma_{n}}}$.
\end{theorem}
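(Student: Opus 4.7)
The plan is to reduce Theorem~\ref{thm:symp} to a statement about exact differentiation of the discrete forward map. Under regularity of $\bar{\epsilon}$, I would first recast the coupled system~\eqref{eqn:symp} as a Hamiltonian system with Hamiltonian $H(\bar{\bx}, \bp, \sigma) = \bp^{\top} \bar{\epsilon}(\bar{\bx}, \sigma)$, where $\bp := \partial L / \partial \bar{\bx}'_{\sigma}$ plays the role of conjugate momentum; the continuous equations then read $\dot{\bar{\bx}} = \partial_{\bp} H$ and $\dot{\bp} = -\partial_{\bar{\bx}} H$. The update rule~\eqref{eqn:sympup1}--\eqref{eqn:sympup2} is exactly the partitioned symplectic Euler scheme that is explicit in the momentum variable but evaluates the Jacobian of $\bar{\epsilon}$ at the \emph{updated} position $\bar{\bx}'_{\sigma_{\tau+1}}$, which is the structural property I would exploit.

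The key algebraic step is to show that~\eqref{eqn:sympup1} is precisely the inverse of one forward Euler step~\eqref{eqn:forward}: the map $\bar{\bx}'_{\sigma_{\tau+1}} \mapsto \bar{\bx}'_{\sigma_{\tau}}$ defined by the forward integrator, when inverted (which is well-defined by an implicit function argument under a step-size regularity condition), coincides with~\eqref{eqn:sympup1}. Consequently, the stored value $\bar{\bx}'_{\sigma_{\tau+1}}$ reused in~\eqref{eqn:sympup2} is exactly the point at which the Jacobian of the forward map must be evaluated when one applies the chain rule to~\eqref{eqn:forward}. I would then differentiate~\eqref{eqn:forward} directly with respect to $\bar{\bx}'_{\sigma_{\tau+1}}$, take the transpose, and check that the resulting vector-Jacobian product matches the right-hand side of~\eqref{eqn:sympup2} up to the scale factor absorbed into $h_{\sigma_\tau}$.

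With that identification in hand, the rest is a telescoping induction: assuming $\partial L / \partial \bar{\bx}'_{\sigma_\tau}$ equals the true derivative of $L$ through the composition of the first $\tau$ discrete forward steps, one application of~\eqref{eqn:sympup2} produces the exact derivative with respect to $\bar{\bx}'_{\sigma_{\tau+1}}$ via the chain rule, with no additional discretization error introduced by the backward pass. After $n$ iterations and the change of variable $\partial L/\partial \bx_t = (1/\sqrt{\alpha_t}) \cdot \partial L/\partial \bar{\bx}'_{\sigma_n}$, we recover the claimed equality.

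The main obstacle I anticipate is the bookkeeping that relates the three different ``times'' in play: the nominal diffusion time $t$, the sub-process index $\tau$, and the monotone reparameterization $\sigma_\tau$; keeping the signs and step sizes $h_{\sigma_\tau}$ consistent across forward and backward passes is delicate, and a careless sign flip would break the identification of~\eqref{eqn:sympup1} with the inverse of~\eqref{eqn:forward}. A secondary technical point is pinning down the regularity conditions on $\bar{\epsilon}$ (continuous differentiability in $\bar{\bx}$, together with a small-enough step size so that the implicit forward-inverse map is a bijection) under which the Hamiltonian reformulation and the two Jacobian identifications are both rigorous.
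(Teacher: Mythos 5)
Your proposal is correct, and it reaches the conclusion by a genuinely different route than the paper. The paper (Appendix~A.3, following Matsubara et al.) introduces the variational variable $\delta(\sigma_\tau)=\partial\bar{\bx}'_{\sigma_\tau}/\partial\bar{\bx}'_{\sigma_t}$ and the adjoint $\lambda(\sigma_\tau)$, and argues via a conservation law: the bilinear form $S(\delta,\lambda)=\lambda^{T}\delta$ is invariant for the continuous system, a direct computation shows the paired updates \eqref{eqn:sympup1}--\eqref{eqn:sympup2} preserve the same quantity exactly at the discrete level (the cross terms cancel precisely because the Jacobian is evaluated at the restored state $\bar{\bx}'_{\sigma_{\tau+1}}$), and telescoping then gives $\lambda(\sigma_n)^{T}\delta(\sigma_n)=\lambda(\sigma_t)^{T}\delta(\sigma_t)$ with $\delta$ the identity at the initial state. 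You instead observe that \eqref{eqn:sympup1} is the exact algebraic inverse of one forward Euler step \eqref{eqn:forward}, that \eqref{eqn:sympup2} is then exactly the transposed-Jacobian (vector--Jacobian product) of that step evaluated at the correct point, and you conclude by induction that the backward pass is exact reverse-mode differentiation of the discrete forward composition. For the first-order scheme these are two framings of the same cancellation, but yours is more elementary and makes transparent \emph{why} no discretization error enters the backward pass; the paper's conservation-law framing buys uniformity across the higher-order symplectic Runge--Kutta schemes of Appendix~A.2, where the stage-by-stage chain-rule bookkeeping would be heavier. Two small remarks: no implicit-function or step-size condition is needed for your ``inverse map'' step, since $\bar{\bx}'_{\sigma_{\tau+1}}$ is simply restored from stored forward checkpoints rather than solved for implicitly; and your proof establishes equality with the exact gradient of the \emph{discrete} forward map, which is the operationally meaningful claim --- the paper's identification of this with the ``analytical solution of the continuous ODE'' additionally requires identifying the discrete and continuous trajectories, a point its own proof also glosses over in step $(b)$ of \eqref{eqn:thm2}.
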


\begin{algorithm}[t!]
\caption{Symplectic Adjoint Guidance (SAG)}
\label{alg:sag}
\begin{algorithmic}[1]
\REQUIRE diffusion model $\epsilon_{\theta}$, condition $\bc$, loss $L$, sampling scheduler $\calS$,  guidance strengths $\rho_{t}$, noise scheduling $\alpha_t$, guidance indicator $[g_T,\ldots,g_1]$, repeat times of time travel $(r_T, \ldots,r_1)$.
\STATE $\bx_T\sim\calN(0, \bI)$
\FOR{$t=T, \ldots, 1$}
\FOR{$i=r_t, \ldots, 1$}
\STATE $\bx_{t-1} = \calS(\bx_t, \epsilon_{\theta}, \bc)$
\IF{$g_t$}
\STATE $\hat{\bx}_0=$ solving~\eqref{eqn:forward} 
in $n$ steps
\STATE $\nabla_{\bx_t}L(\hat{\bx}_{0},\bc)$ =  solving~\eqref{eqn:sympup1} and~\eqref{eqn:sympup2}.
\STATE $\bx_{t-1} = \bx_{t-1} - \rho_{t}\nabla_{\bx_t}L(\hat{\bx}_{0},\bc) $
\ENDIF
\STATE $\bx_t=\frac{\sqrt{\alpha_t}}{\sqrt{\alpha_{t-1}}}\bx_{t-1} + \frac{\sqrt{\alpha_{t-1}-\alpha_t}}{\sqrt{\alpha_{t-1}}}\epsilon'$  with $\epsilon'\sim \calN(0, \bI)$
\ENDFOR
\ENDFOR
\end{algorithmic}
\end{algorithm}

Combining the two stages above, namely the $n$-step estimate of the clean output and the symplectic adjoint method, we have the Symplectic Adjoint Guidance (SAG) method, which is shown in Algorithm~\ref{alg:sag}. We also apply the time-travel strategy~\cite{Bansal_2023_CVPR,Yu2023freedom} into our algorithm. The sampling/denoising scheduler $\calS$ could be any popular sampling algorithm, including DDIM~\cite{song_denoising_2022}, DPM-solver~\cite{lu2022dpm}, and DEIS~\cite{zhang2022fast}. The overall illustration of our SAG is shown in Fig.~\ref{fig:intro}. 

\vspace{-0.5em}
\paragraph{Runtime analysis}
While increasing $n$ can mitigate the misalignment of $\hat{\bx}'_0$ and lead to a highly enhanced quality of generated images in different tasks, it also proportionally increases the runtime. 
There exists a trade-off between computational cost and the generation quality. When $n=1$, SAG degenerates to one-step-estimate guidance methods (\eg FreeDOM~\cite{Yu2023freedom}). The computation cost decreases but the sample quality is compromised. In practice, we 
could design an adaptive guidance strategy where the number of estimate steps dynamically adjusts itself as the main sampling process proceeds. For example,  we may use a relatively large $n$ at the early sampling stage and then gradually decrease to the one-step estimate when $\hat{\bx}'_0$ is not far from the final generations.
Besides adaptively adjusting the number of estimate steps $n$, SAG also allows us to select the subset of intermediate steps of the main sampling process for guiding, which is indicated by $\bg^{T:1}$. Usually, we only choose a sequence of the middle stage for guiding, \ie, $g_{t}=1$ for $t\in[K_2, K_1]$ with $0<K_1<K_2<T$ and $g_{t}=0$ for others. That is because the states at the very early denoising stage are less informative about the final outputs, and the states at the very last stage almost decide the appearance of final outputs and barely can make more changes.
\vspace{-0.5em}
\section{Experiments}
\label{sec:exp}
\vspace{-0.5em}
We present experimental results to show the effectiveness of SAG. We apply SAG to several image and video generation tasks, including style-guided image generation, image aesthetic improvement, personalized image generation (object guidance and face-ID guidance), and video stylization. We conduct ablation experiments to study the effectiveness of hyperparameters including the number of estimation steps $n$, guidance scale $\rho_t$, \textit{etc}.

\begin{figure}
    \centering
    \includegraphics[width=0.9\linewidth]{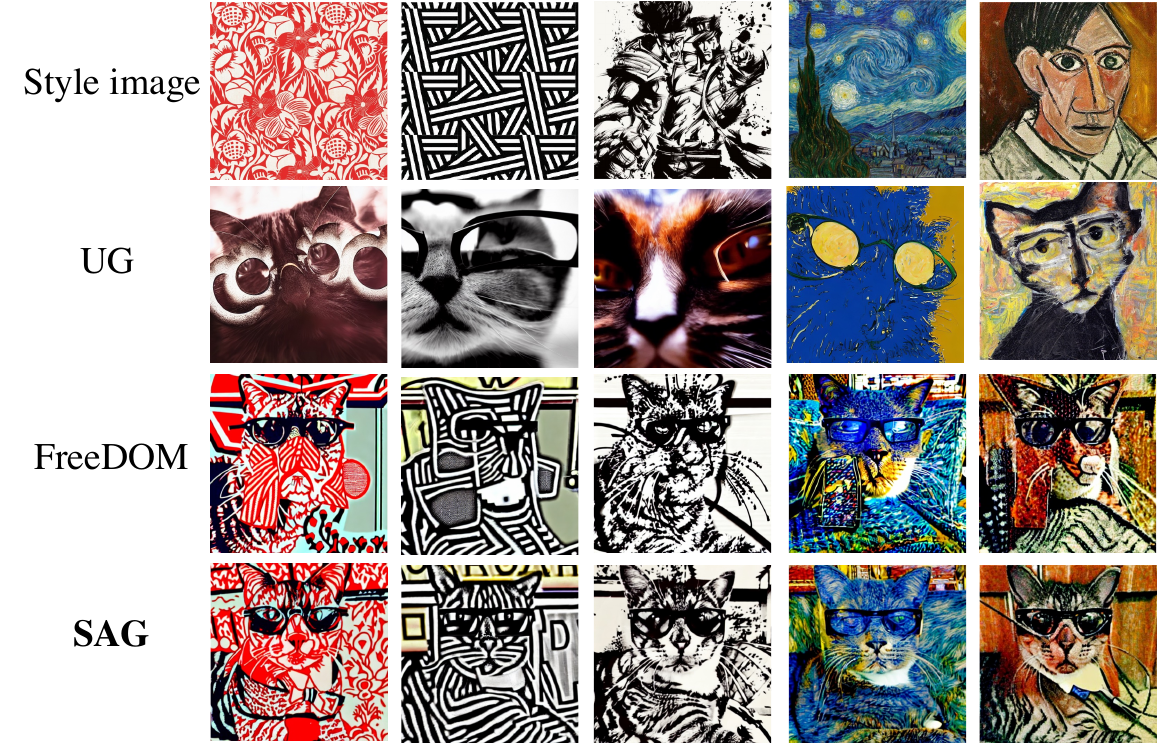}
    \caption{Stylization results of \emph{``A cat wearing glasses"}.}
    \label{fig:style}
\end{figure}

\begin{table}
\begin{subtable}[h]{0.48\linewidth}
  \centering
  \scalebox{0.7}{\begin{tabular}{@{}lcc@{}}
    \toprule
    Method & Style loss ($\downarrow$) & CLIP ($\uparrow$) \\
    \midrule
    FreeDOM & 482.7 & 22.37\\
    UG & 805 & 23.02\\
    SAG & \textbf{386.6} & \textbf{23.51}\\
    \bottomrule
  \end{tabular}}
  \caption{Style guided generation.}
  \label{table:style}
\end{subtable}
\begin{subtable}[h]{0.42\linewidth}
  \centering
  \scalebox{0.7}{\begin{tabular}{@{}lcc@{}}
    \toprule
    Method & ID loss ($\downarrow$) & FID ($\downarrow$) \\
    \midrule
    FreeDOM & 0.602 & 65.24\\
    SAG & \textbf{0.574} & \textbf{64.25}\\
    ~ \\
    \bottomrule
  \end{tabular}}
  \caption{Face-ID guided generation.}
  \label{table:face}
\end{subtable}
\caption{Quantitative Comparison: (a) Stylization quality measured by style loss and clip Score, (b) Performance of face ID guided generation assessed using face ID Loss and FID.}
\end{table}

\vspace{-0.5em}
\subsection{Style-Guided Sampling}
\label{subsec:style}
\vspace{-0.5em}
Style-guided sampling generates an output image that seamlessly merges the content's structure with the chosen stylistic elements from reference style images, showcasing a harmonious blend of content and style. To perform style-guided sampling, following the implementation of~\cite{Yu2023freedom}, we use the features from the third layer of the CLIP image encoder as our feature vector. The loss function is $L_2$-norm between the Gram matrix of the style image and the Gram matrix of the estimated clean image. We use the gradients of this loss function to guide the generation in Stable Diffusion~\cite{rombach_high-resolution_2022}. 
We set $n=4$.

We compare our results with FreeDOM~\cite{Yu2023freedom} and Universal Guidance (UG)~\cite{Bansal_2023_CVPR}.  We use style loss as a metric to measure the stylization performance and use the CLIP~\cite{radford_learning_2021} score to measure the similarity between generated images and input prompts. Good stylization implies that the style of generated images should be close to the reference style image while aligning with the given prompt. We obtain the quantitative results by randomly selecting five style images and four prompts, generating five images per style and per prompt. We use the officially released codes to generate the results of FreeDOM\footnote{\url{https://github.com/vvictoryuki/FreeDoM.git}} and Universal Guidance\footnote{\url{https://github.com/arpitbansal297/Universal-Guided-Diffusion.git}} under the same style and prompt. The qualitative results are shown in Fig.~\ref{fig:style} and quantitative results are shown in Table~\ref{table:style}. Full details can be found in Appendix~\ref{subapp:style} and more results in Appendix~\ref{app:qual}.

From Fig.~\ref{fig:style} and Table.~\ref{table:style}, we can find that SAG has the best performance compared with FreeDOM and UG as it has better stylization phenomena and it can largely preserve the content of images with the given text prompt. Besides, it is obvious that UG performs the worst in terms of stylization. We can observe that stylization by UG is not obvious for some style images and the image content is distorted.

\begin{figure}[h!]
    \centering
    \includegraphics[width=0.8\linewidth]{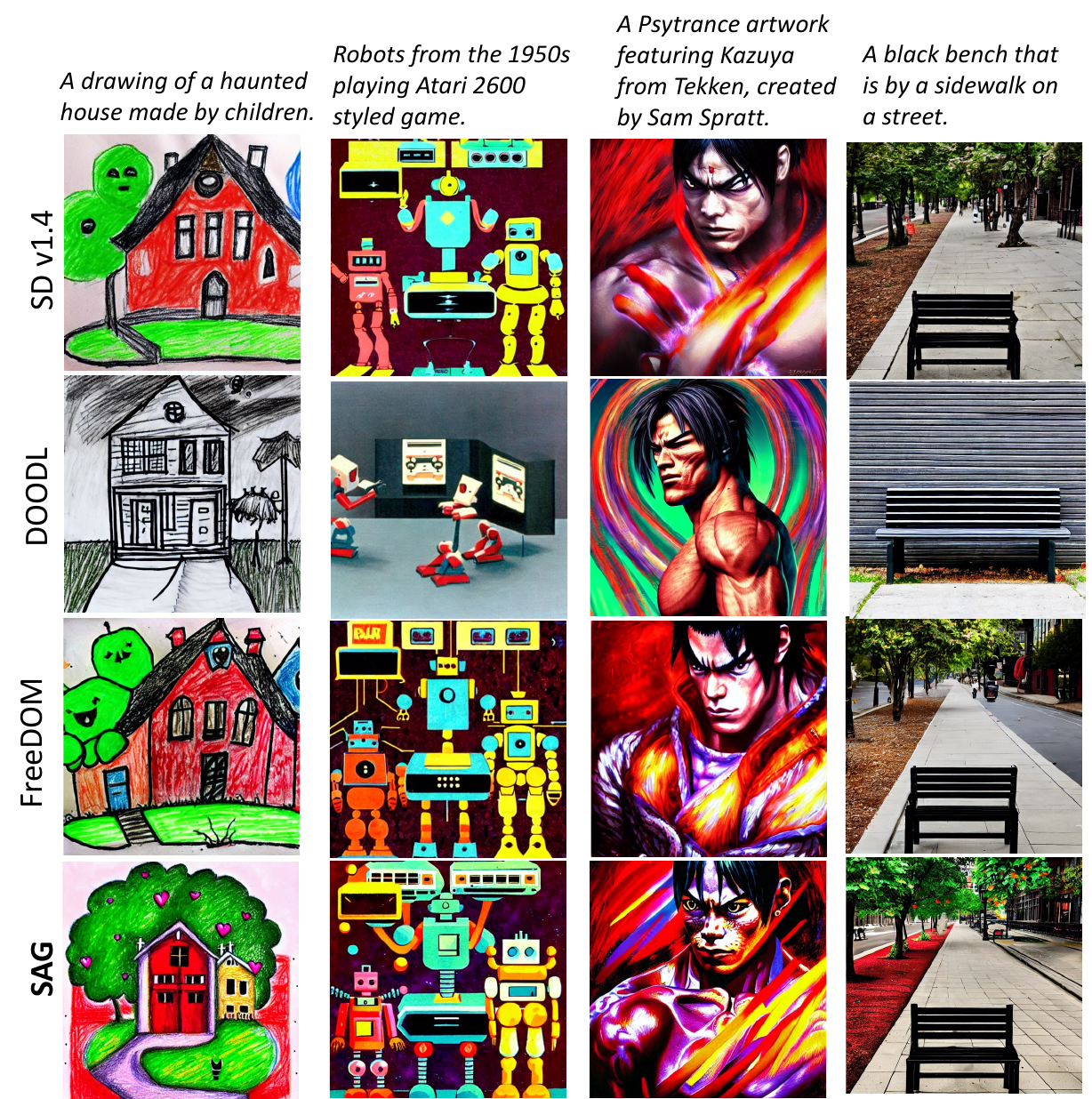}
    \caption{Examples on aesthetic improvement}
    \label{fig:aes}
    \vspace{-0.5em}
\end{figure}

\vspace{-0.5em}
\subsection{Aesthetic Improvement}
\label{subsec:aes}
\vspace{-0.5em}
In this task, we consider improving the aesthetic quality of generated images through the guidance of aesthetic scores obtained by the LAION aesthetic predictor,\footnote{\url{https://github.com/LAION-AI/aesthetic-predictor.git}} PickScore~\cite{kirstain2023pick} and HPSv2~\cite{wu2023human}. The LAION aesthetic predictor is a linear head pre-trained on top of CLIP visual embeddings to predict a value ranging from 1 to 10, which indicates the aesthetic quality. PickScore and HPSv2 are two reward functions trained on human preference data. We set $n=4$ and use the linear combination of these three scores as metrics to guide image generation. We randomly select ten prompts from four prompt categories, \emph{Animation, Concept Art, Paintings, Photos}, and generate one image for each prompt. We compare the resulting weighted aesthetic scores of all generated images with baseline Stable Diffusion (SD) v1.5, DOODL~\cite{wallace2023endtoend} and FreeDOM~\cite{Yu2023freedom} in Table~\ref{table:aes}. The results were generated using the official code released by DOODL.\footnote{\url{https://github.com/salesforce/DOODL.git}} The qualitative comparison is shown in Fig~\ref{fig:aes}. We find that our method has the best aesthetic improvement effect, with more details and richer color. Besides, as DOODL optimizes the initial noise to enhance aesthetics, the generated images will be different from the original generated images. Experimental details are shown in Appendix~\ref{subapp:aes} and more results in Appendix~\ref{app:qual}.


\begin{table}
\begin{subtable}[h]{0.45\linewidth}
  \centering
  \scalebox{0.7}{\begin{tabular}{@{}lc@{}}
    \toprule
    Method & Aesthetic loss($\downarrow$)  \\
    \midrule
    SD v1.5 & 9.71 \\
    FreeDOM & 9.18 \\
    DOODL &  9.78 \\
    SAG & \textbf{8.17} \\
    \bottomrule
  \end{tabular}}
  \caption{Aesthetic improvement.}
  \label{table:aes}
\end{subtable}
\begin{subtable}[h]{0.45\linewidth}
 \centering
  \scalebox{0.7}{\begin{tabular}{@{}lcc@{}}
    \toprule
    Method & CLIP-I ($\uparrow$) & CLIP-T ($\uparrow$) \\
    \midrule
    DreamBooth & 0.724 & 0.277\\
    FreeDOM & 0.681 & \textbf{0.281}\\
    DOODL &  0.743 & 0.277\\
    SAG & \textbf{0.774} & 0.270\\
    \bottomrule
  \end{tabular}}
  \caption{Object guided generation.}
  \label{table:person}
\end{subtable}
\caption{Quantitative Comparison: (a) Aesthetic loss for image aesthetics, (b) Clip image and clip text scores for object-guided generation performance.}
\vspace{-1em}
\end{table}

\subsection{Personalization}
\vspace{-0.5em}
\label{subsec:person}

Personalization aims to generate images that contain a highly specific subject in new contexts.
When given a few images (usually 3-5) of a specific subject, DreamBooth~\cite{ruiz_dreambooth_2022} and Textual Inversion~\cite{gal2023an} learn or finetune components such as text embedding or subsets of diffusion model parameters to blend the subject into generated images. However, when there is only a single image, the performance is not satisfactory. In this section, we use symplectic adjoint guidance to generate a personalized image without additional generative model training or tuning based on a single example image. We conduct experiments with two settings: (1) general object guidance and (2) face-ID guidance.
\begin{figure}[h!]
    \vspace{-.5em}
    \centering
    \includegraphics[width=0.7\linewidth]{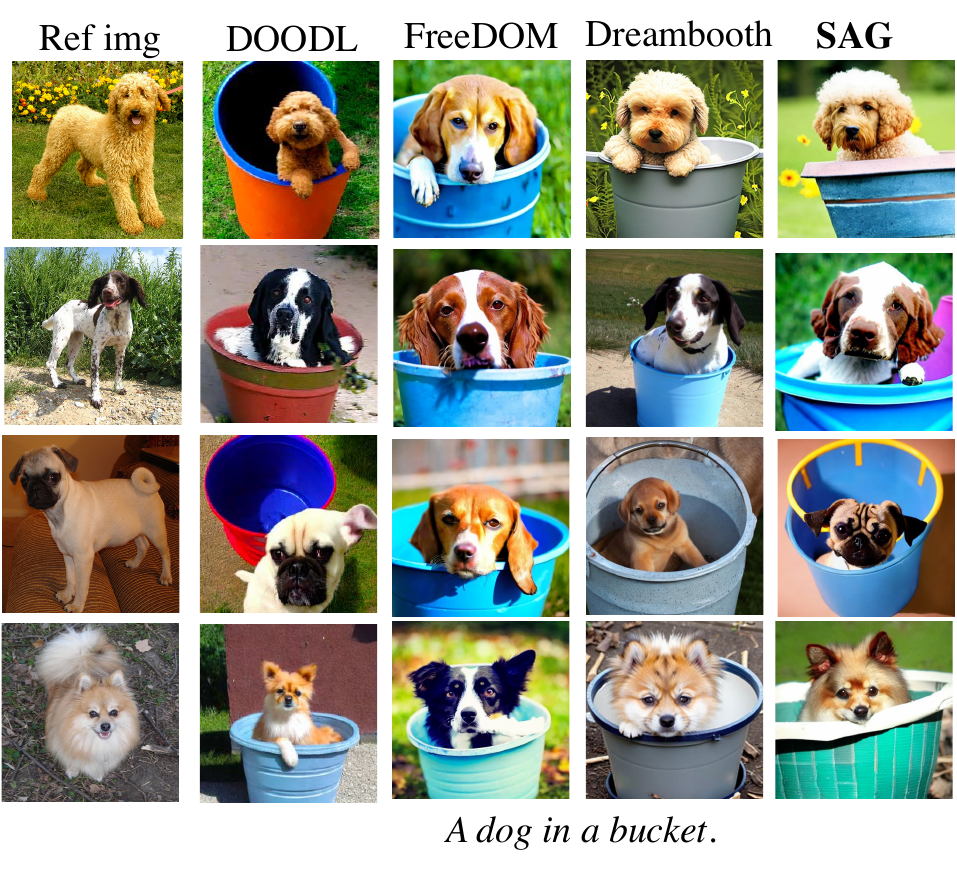}
    \vspace{-1em}
    \caption{Examples on object-guided sampling}
    \label{fig:person}
\end{figure}

\vspace{-1.5em}
\paragraph{Object Guidance}
We first do the personalization of certain objects in Stable Diffusion. We use a spherical distance loss~\cite{wallace2023endtoend} to compute the distance between image features of generated images and reference images obtained from ViT-H-14 CLIP model.\footnote{\url{https://github.com/mlfoundations/open_clip.git}} In this task, we set $n=4$. We compare our results with FreeDOM~\cite{Yu2023freedom}, DOODL~\cite{wallace2023endtoend} and DreamBooth~\cite{ruiz_dreambooth_2022}. The results of DreamBooth\footnote{\url{https://github.com/XavierXiao/Dreambooth-Stable-Diffusion.git}} is generated using the official code. We use DreamBooth to finetune the model for 400 steps and set the learning rate as $1\times 10^{-6}$ with only one training sample. We use the cosine similarity between CLIP~\cite{radford_learning_2021} embeddings of generated and reference images (denoted as CLIP-I) and the cosine similarity between CLIP embeddings of generated images and given text prompts (denoted as CLIP-T) to measure the performance. The quantitative comparison is shown in Table.~\ref{table:person} and the qualitative results are shown in Fig.~\ref{fig:person}. We can find that images generated by SAG have the highest CLIP image similarity with reference images. We show experimental details in Appendix~\ref{subapp:person} and more results in Appendix~\ref{app:qual}.
\begin{figure}[h!]
\vspace{-0.5em}
    \centering
    \includegraphics[width=\linewidth]{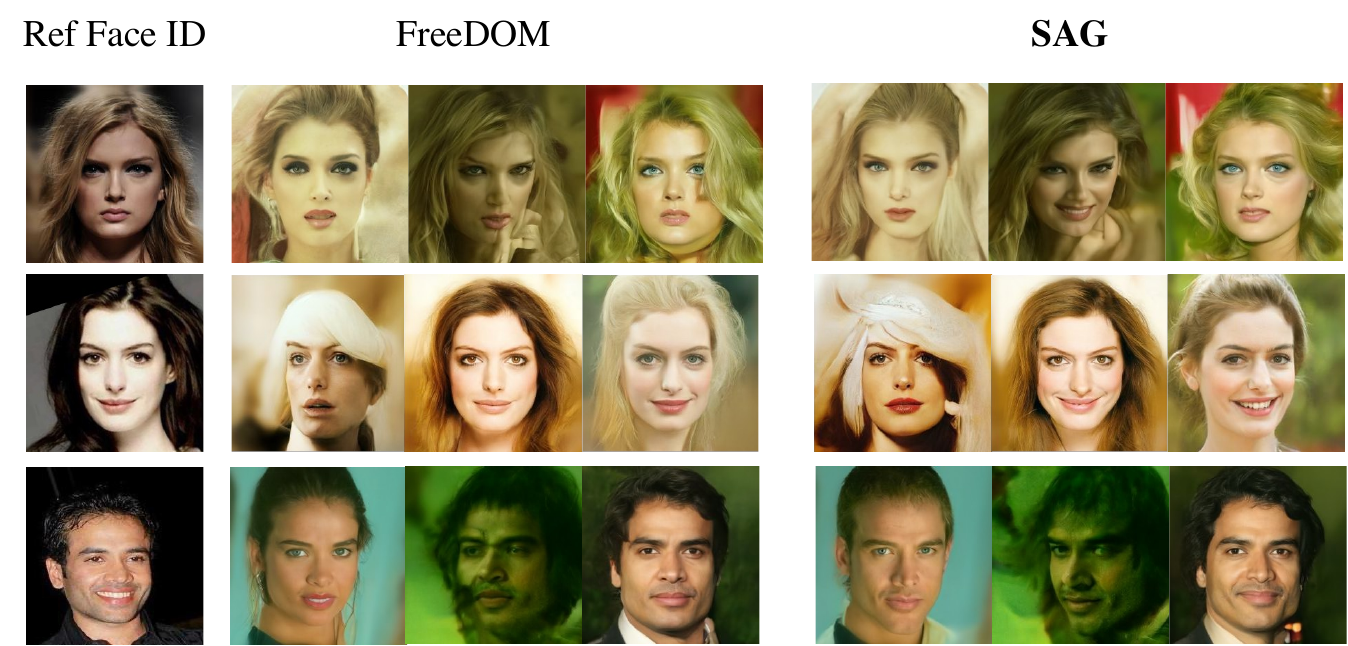}
    \caption{Examples on Face ID guided generation}
    \label{fig:face}
\end{figure}

\vspace{-1.5em}
\paragraph{Face-ID Guidance}
Following the implementation of~\cite{Yu2023freedom}, we use ArcFace to extract the target features of reference faces to represent face IDs and compute the $l_2$ Euclidean distance between the extracted ID features of the estimated clean image and the reference face image as the loss function. In this task, we set $n=5$. 
We compare our Face ID guided generation results with FreeDOM and measure the performance using the loss and FID, respectively.  We randomly select five face IDs and generate 200 faces for each face IDs. We show the qualitative results in Fig.~\ref{fig:face} and the quantitative results in Table~\ref{table:face}. Compared with FreeDOM, SAG matches the conditional Face IDs better with better generation image quality (lower FID). 


\subsection{Video Stylization}
\label{subsec:exp_video}
\vspace{-0.5em}
We also apply the SAG method for style-guided video editing, where we change the content and style of the original video while keeping the motion unchanged. For example, given a video of a dog running, we want to generate a video of a cat running with a sketch painting style. In this experiment, we use MagicEdit~\cite{liew2023magicedit}, which is a video generation model conditioning on a text prompt and a sequence of depth maps for video editing. Given an input video, we first extract a sequence of depth maps. By conditioning on the depth maps, MagicEdit renders a video whose motion follows that in the original video. Using the style Gram metric in Sec. \ref{subsec:style}, we can compute the average loss between each frame and the reference style image. 

Since the depth and text conditions provide very rich information about the final output, MagicEdit can synthesize high-quality videos within 25 steps (\ie denoising for $T$ from $25$ to $0$).
We use MagicEdit to render video of 16 frames where the resolution of each frame is $256\times 256$. We apply the SAG guidance to the steps of $T\in[20,10]$. As shown in Figure \ref{fig:exp-video}, SAG can effectively enable MagicEdit to generate videos of specific styles (\eg, a cat of the Chinese papercut style). In contrast, without SAG, the base editing model can barely synthesize videos whose color and texture align with the reference image. More experimental details are shown in Appendix~\ref{app:video}.

\begin{figure}
    \centering
    \includegraphics[width=\linewidth]{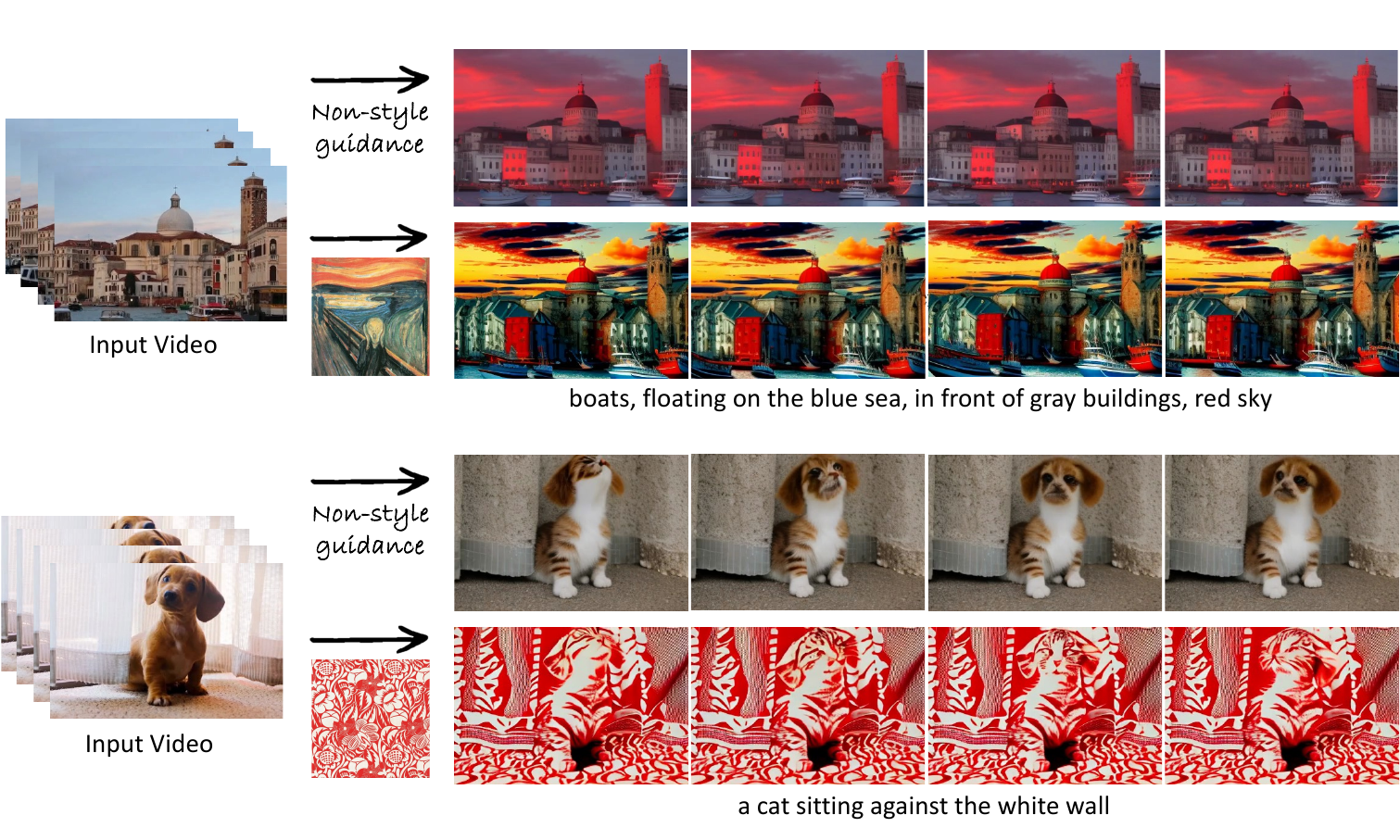}
    \caption{Examples on Video Stylization. For each input, the upper row is rendered on the conditioning of a text prompt and the depth sequence. The lower row is the output with the extra style guidance.}
    \label{fig:exp-video}
    \vspace{-0.5em}
\end{figure}

\subsection{Ablation Study}
\label{sec:ablation}
\vspace{-0.5em}

\paragraph{Choice of $n$.}
We investigate the impact of varying values of $n$ on the model's performance. Taking the stylization task as an example, we set $T=100$ and perform training-free guidance from step 70 to step 31. We use the prompts: \emph{``A cat wearing glasses"}, \emph{``butterfly"} and \emph{``A photo of an Eiffel Tower"} to generate 20 stylized images for each $n$.  The results are shown in Fig.~\ref{fig:abl_n} and the loss curve is in Fig.~\ref{fig:n_loss}. We can observe that when $n=1$ which reduces to  FreeDOM~\cite{Yu2023freedom}), the stylized images suffer from content distortion and less obvious stylization effect. As $n$ increases, both the quality of generated images and the reduction in loss between generated images and style images become more prominent. Notably, when $n$ increases beyond 4, there is no significant decrease in loss, indicating that setting $n$ to a large value is unnecessary. Besides, we notice that a small value of $n$, as long as greater than 1, could significantly help improve the quality of generated images. In most experiments, we set $n$ to be 4 or 5. 


\begin{figure}
    \centering
    \includegraphics[width=0.7\linewidth]{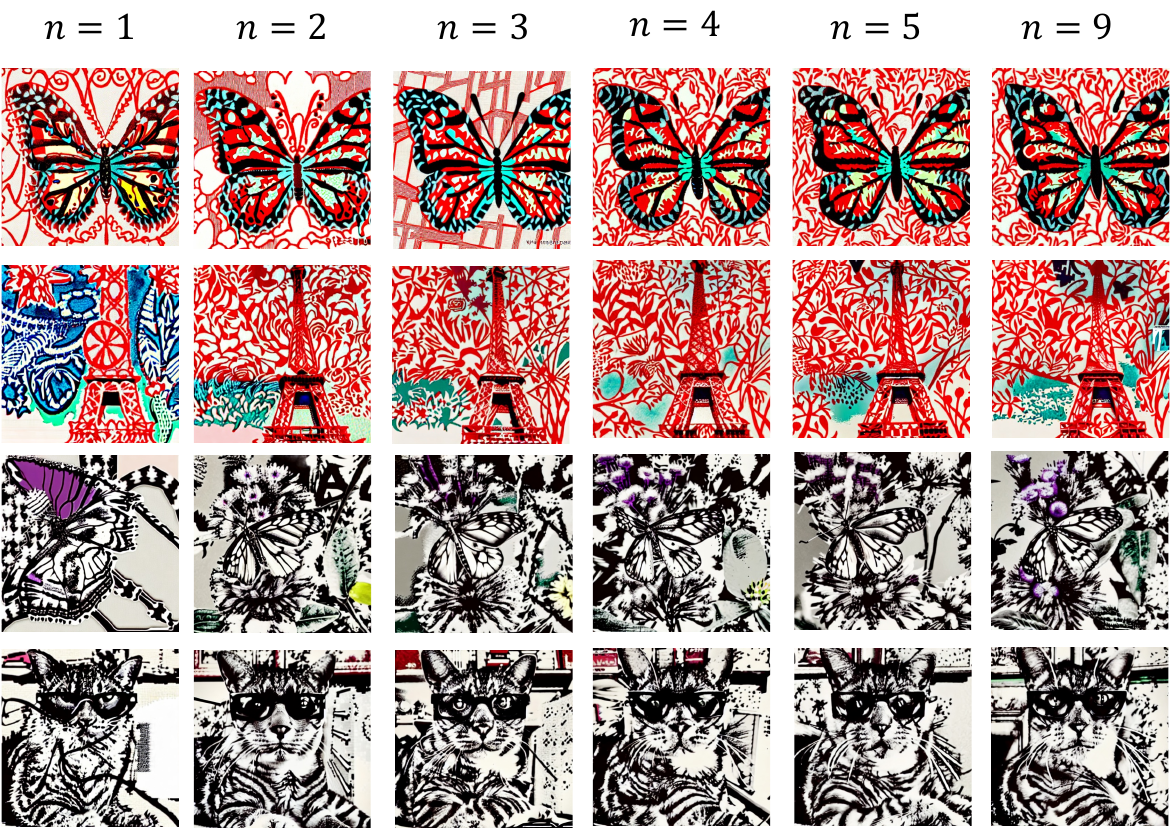}
    \caption{Stylization results with varying $n$.}
    \label{fig:abl_n}
    \vspace{-1em}
\end{figure}

\vspace{-0.5em}
\paragraph{Guidance scale $\rho_{t}$.}
We then study the influence of the guidance scale on the performance. Once again, we take stylization as an example and test the results under $n=1$ and $n=3$. We gradually increase the guidance scale and show the results in Fig.~\ref{fig:rho}. We can observe that when the scale increases, the stylization becomes more obvious, but when the scale gets too large, the generated images suffer from severe artifacts. 
\begin{figure}[h!]
    \centering
    \includegraphics[width=0.8\linewidth]{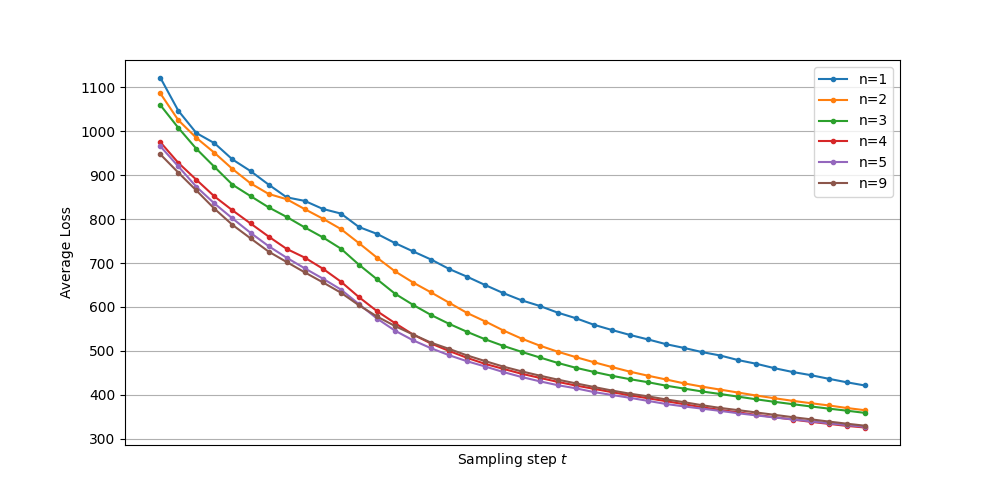}
    \caption{Loss curves for stylization under different $n$.}
    \label{fig:n_loss}
    \vspace{-0.5em}
\end{figure}

\begin{figure}[h!]
    \vspace{-0.6em}
    \centering
    \includegraphics[width=\linewidth]{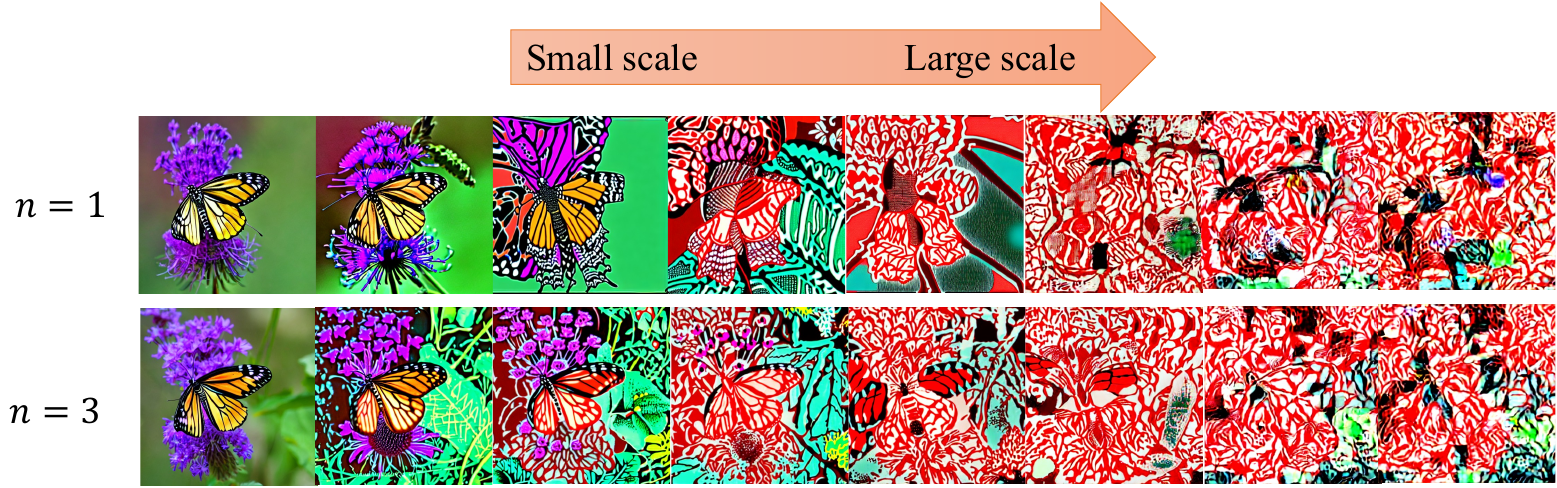}
    \caption{Stylization results when increasing scales.}
    \vspace{-0.6em}
    \label{fig:rho}
\end{figure}
\vspace{-1em}
\paragraph{Choice of guidance steps and repeat times of time travel.}
Finally, we also conduct experiments to study at which sampling steps should we do training-free guidance and the repeat times of time travel. As discussed in~\cite{Yu2023freedom}, the diffusion sampling process roughly includes three stages: the chaotic stage where $\bx_t$ is highly noisy, the semantic stage where $\bx_t$ presents some semantics and the refinement stage where changes in the generated results are minimal. Besides, for the repeat times, intuitively, increasing the repeat times extends the diffusion sampling process and helps to explore results that satisfy both guidance and image quality. Thus, in tasks such as stylization and aesthetic improvement that do not require change in content, we only need to do guidance in the semantic stage with a few repeat times (in these tasks, we set repeat times to 2). On the other hand, for tasks such as personalization, we need to perform guidance at the chaotic stage and use larger repeat times (here we set it to 3). More ablation study results are shown in Appendix~\ref{subapp:abla}.

{
    \small
    \bibliographystyle{ieeenat_fullname}
    \bibliography{main}
}
\clearpage
\setcounter{page}{1}
\maketitlesupplementary

\appendix
\section{Theoretical details on Symplectic Adjoint Guidance (SAG)}
\label{app:theory}

\subsection{Proof of Lemma~\ref{lem:est}}
\label{subapp:lem}

We know that the sub-process {\color{cvprblue}\textbf{$\bx'_{\tau}$ and $\bar{\bx}'_{\sigma}$}} also satisfy the following ODE with the initial condition of $\bar{\bx}_{\sigma_t}$ (i.e., $\bx_t$):
\begin{equation}
\label{eqn:appode}
\rmd \bar{\bx}'_{\sigma_{\tau}}= \bar{\epsilon} (\bar{\bx}'_{\sigma_{\tau}},\sigma_{\tau})\rmd \sigma_{\tau}.
\end{equation}
This means that when given the initial condition of $\bx_t$, the samples generated by solving the subprocess ODE~\eqref{eqn:appode} also share the same conditional probability density $p(\bx_0|\bx_t)$ as the main generation process. Besides, we know that the approximation of final outputs using numerical solvers is related to discretization step size $\calO(h_\sigma)$. In our paper, we usually discretize the time range $[t, 0]$ using a uniform step size. Thus, the approximation of final outputs is related to the number of discretization steps $n$. When we use larger steps to solve~\eqref{eqn:appode}, the final solution is closer to the true one, which will make $m$ smaller. Thus, we show the Lemma~\ref{lem:est}.

\subsection{Higher-order symplectic method}
\label{subapp:highorder}

We introduce the first-order (i.e., Euler) symplectic method in Sec.~\ref{sec:meth}. In this part, we introduce the higher-order symplectic method. We present the Symplectic Runge-Kutta method~\cite{matsubara2021symplectic} as an example of a higher-order method. Let $\tau=[n,\ldots, 1]$ denote the discrete steps corresponding to the times from $t$ to $0$ and $\sigma_\tau = \sqrt{1-\alpha_{\tau}}/\sqrt{\alpha_\tau}$. In the Symplectic Runge-Kutta method, we solve the forward ODE~\eqref{eqn:ode1} using the Runge-Kutta solver:
\begin{align}
    \bar{\bx}'_{\sigma_{\tau-1}} &= \bar{\bx}'_{\sigma_{\tau}} + h_{\sigma_{\tau}} \sum_{i=1}^s b_i k_{\sigma_{\tau},i},\notag\\
    k_{\sigma_{\tau},i}:&= \bar{\epsilon}(\bar{X}_{\sigma_{\tau}, i }, \sigma_{\tau} + c_i h_{\sigma_{\tau}}),\notag\\
    \bar{X}_{\sigma_{\tau}, i }:&= \bar{\bx}'_{\sigma_{\tau}} + h_{\sigma_{\tau}} \sum_{j=1}^s a_{i,j} k_{\sigma_{\tau},j}\label{eqn:thm2},
\end{align}
where $a_{i,j}=0$ when $j\geq i$ and the coefficients $a_{i,j}, b_i, c_i$ are summarized as the Butcher tableau~\cite{hairer2006structure}. Then when we solve in the backward direction to obtain the gradients using the Symplectic Runge-Kutta method, we solve the ODE function related to the adjoint state by another Runge–Kutta method with the same step size. It is expressed as
\begin{align}
    \frac{\partial L}{\partial \bar{\bx}'_{\sigma_{\tau}}} &= \frac{\partial L}{\partial \bar{\bx}'_{\sigma_{\tau-1}}} + h_{\sigma_{\tau-1}}\sum_{i=1}^s B_i l_{{\sigma_{\tau-1}},i}, \notag\\
    l_{{\sigma_{\tau-1}},i}:&= -\frac{\partial{\bar{\epsilon}}}{\partial\bar{\bx}'} (\bar{X}_{\sigma_{\tau-1}, i }, \sigma_{\tau-1} + C_i h_{\sigma_{\tau-1}})^T \Lambda_{{\sigma_{\tau-1}},i},\notag\\
    \Lambda_{{\sigma_{\tau-1}},i}:& = \frac{\partial L}{\partial \bar{\bx}'_{\sigma_{\tau-1}}} + h_{\sigma_{\tau-1}}\sum_{j=1}^s A_{i,j} l_{{\sigma_{\tau-1}},j}.\label{eqn:thm3}
\end{align}
The conditions on the parameters are $b_iA_{i,j} + B_j a_{j,i} -b_iB_j=0$ for $i, j = 1,\ldots , s$ and $B_i=b_i\neq 0$ and $C_i=c_i$ for $i=1,\ldots,s$. Besides, the forward solutions $\{\bar{\bx}'_{\sigma_{\tau}}\}_{\tau = 0}^{n}$ needs to save as checkpoints for the backward process.

\subsection{Proof of Theorem~\ref{thm:symp}}
\label{subapp:them}
The Symplectic Euler method we show in Sec.~\ref{sec:meth} is a special case of the higher-order symplectic method when we set $s=1, b_1=1, c_i=0$ in the forward process and set $s=1$ and $b_1=1, B_1=1, a_{1,1}=1, A_{1,1}=0, c_i=C_i=1$ in the backward process. 

To show the formal expression of Theorem~\ref{thm:symp}, we first introduce a variational variable $\delta(\sigma_{\tau})= \frac{\partial \bar{\bx}'_{\sigma_{\tau}}}{\partial \bar{\bx}'_{\sigma_t}}$, which represent the Jacobian of the state $\bar{\bx}'_{\sigma_{\tau}}$ with respect to $\bar{\bx}'_{\sigma_t}$. Denote $\lambda(\sigma_{\tau})=\frac{\partial L}{\partial \bar{\bx}'_{\sigma_\tau}}$ and denote $S(\delta, \lambda)=\lambda^T\delta$.
\begin{theorem} 
\label{thm:symp}
    Let the gradient $\frac{\partial L}{\partial \bar{\bx}'_{\sigma_t}}$  be the analytical solution to the continuous ODE in~\eqref{eqn:symp} and let $\frac{\partial L}{\partial \bar{\bx}'_{\sigma_{n}}}$ be the gradient obtained by the symplectic Euler solver in~\eqref{eqn:sympup2} throughout the discrete sampling process. Then, when $S(\delta,\lambda)$ is conserved (i.e., time-invariant) for the continuous-time system, we have $\frac{\partial L}{\partial \bar{\bx}'_{\sigma_t}} = \frac{\partial L}{\partial \bar{\bx}'_{\sigma_{n}}}$.
\end{theorem}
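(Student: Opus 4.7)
The strategy is to mirror the continuous conservation of $S(\delta,\lambda)=\lambda^T\delta$ at the discrete level, showing that the symplectic Euler update of equations~\eqref{eqn:sympup1}--\eqref{eqn:sympup2} exactly preserves this bilinear form between consecutive steps. Since in continuous time $S$ is, by hypothesis, conserved and equals $\partial L/\partial \bar{\bx}'_{\sigma_t}$ at the initial endpoint (where $\delta=I$), a step-by-step discrete conservation will force the value returned by the symplectic recurrence at the opposite endpoint to coincide with the analytical gradient.

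First I would set up the variational and adjoint systems. Differentiating the continuous ODE~\eqref{eqn:ode1} with respect to $\bar{\bx}'_{\sigma_t}$ gives the variational equation for $\delta(\sigma)$, while $\lambda(\sigma)$ satisfies the adjoint ODE in~\eqref{eqn:symp}. A direct computation of $\frac{\rmd}{\rmd\sigma}S(\delta,\lambda)$ shows the two systems are dual in such a way that $S$ is time-invariant; combined with the boundary condition $\delta(\sigma_t)=I$, this already yields the continuous identity $S(\delta(\sigma),\lambda(\sigma))=\partial L/\partial \bar{\bx}'_{\sigma_t}$ for every $\sigma$, which is the content of the assumption in the theorem.

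Next I would prove the discrete analogue. Let $\delta_\tau$ satisfy the linearization of the forward update~\eqref{eqn:sympup1} obtained by differentiating both sides with respect to $\bar{\bx}'_{\sigma_t}$; crucially, because the forward symplectic step evaluates $\bar{\epsilon}$ at the new time $\sigma_{\tau+1}$, the Jacobian appearing in the linearization is exactly $\partial\bar{\epsilon}(\bar{\bx}'_{\sigma_{\tau+1}},\sigma_{\tau+1})/\partial\bar{\bx}'$, which is the same Jacobian used in the adjoint update~\eqref{eqn:sympup2}. Plugging the update rules for $\delta_{\tau+1}$ and $\lambda_{\tau+1}$ into $\lambda_{\tau+1}^T\delta_{\tau+1}$ and expanding, the cross-terms cancel algebraically, giving the exact identity $\lambda_{\tau+1}^T\delta_{\tau+1}=\lambda_\tau^T\delta_\tau$ with no $O(h_\sigma^2)$ remainder. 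Iterating from $\tau=0$ to $\tau=n-1$ and using $\delta_0=I$ at the endpoint where the gradient is initialised from $\partial L/\partial \bar{\bx}'_{\sigma_n}$, together with $\lambda_n$ coinciding with the symplectic output at the other endpoint, yields the claimed equality.

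The main obstacle is step three: verifying that the mismatched time labels in the symplectic scheme — state evaluated at $\sigma_{\tau+1}$ in the forward update and Jacobian also evaluated at $\sigma_{\tau+1}$ in the backward update — are precisely what makes the bilinear form exactly conserved rather than conserved only to first order. This is where the symplecticity of the combined map on the augmented $(\bar{\bx}',\lambda)$ phase space enters, and where care is needed to distinguish the proposed scheme from the vanilla adjoint method (which evaluates the Jacobian at $\sigma_\tau$ and therefore incurs a genuine discretisation error). I would invoke the standard partitioned Runge--Kutta symplecticity condition $b_iA_{i,j}+B_ja_{j,i}-b_iB_j=0$ specialised to $s=1$ as described in Appendix~\ref{subapp:highorder}, which is satisfied by the chosen coefficients and makes the exact cancellation transparent.
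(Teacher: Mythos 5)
Your proposal is correct and follows essentially the same route as the paper's proof: both establish that the symplectic Euler updates exactly conserve the discrete bilinear form $\lambda^T\delta$ at each step (the cancellation hinging on the forward linearization and the adjoint update sharing the Jacobian $\partial\bar{\epsilon}(\bar{\bx}'_{\sigma_{\tau+1}},\sigma_{\tau+1})/\partial\bar{\bx}'$, which is the $s=1$ instance of the partitioned Runge--Kutta symplecticity condition), then telescope from $\tau=0$ to $n$ and identify the endpoints using $\delta(\sigma_0)=I$ and the continuous conservation hypothesis. No gaps worth flagging.
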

\begin{proof}
    As we assume $S(\delta, \lambda)$ is conserved for the continuous-time system, we have
    \begin{align*}
        \frac{\rmd }{\rmd \sigma} S(\delta,\lambda) =0. 
    \end{align*}
    Thus we have
    \begin{align*}
        \lambda^T \frac{\rmd \delta}{\rmd \sigma} +  \left(\frac{\rmd \lambda}{\rmd \sigma}\right)^T\delta = 0.
    \end{align*}
    This means that~\cite{matsubara2021symplectic}
    \begin{align*}
       S\left(\frac{\partial k_{\sigma_\tau, i}}{\partial \bar{\bx}_{\sigma_t}},\Lambda_{{\sigma_{\tau}},i} \right) + S\left(\frac{\partial \bar{X}_{\sigma_\tau, i}}{\partial \bar{\bx}_{\sigma_t}}, l_{{\sigma_{\tau}},i}\right)=0
    \end{align*}
    Based on~\eqref{eqn:sympup1} and ~\eqref{eqn:sympup2}, we have
    \begin{align*}
        \delta(\sigma_{\tau+1}) &= \delta(\sigma_{\tau}) +h_{\sigma_{\tau}}\frac{\partial k_{\sigma_\tau, 1}}{\partial \bar{\bx}_{\sigma_t}},\\
        \lambda(\sigma_{\tau+1}) &= \lambda(\sigma_{\tau}) +h_{\sigma_{\tau}}l_{{\sigma_{\tau}},1},
    \end{align*}
    which means
    \begin{align}
        &S(\lambda(\sigma_{\tau+1}),\delta(\sigma_{\tau+1})) - S(\lambda(\sigma_{\tau}),\delta(\sigma_{\tau}))=\notag\\
        &=S\left(\lambda(\sigma_{\tau}) + h_{\sigma_{\tau}} l_{{\sigma_{\tau}},1}, \delta(\sigma_{\tau})+h_{\sigma_{\tau}} \frac{\partial k_{\sigma_\tau, 1}}{\partial \bar{\bx}_{\sigma_t}}\right)\notag\\
        &\qquad- S(\lambda(\sigma_{\tau}),\delta(\sigma_{\tau}))\notag\\
        &=h_{\sigma_{\tau}} S\left(\lambda(\sigma_{\tau}), \frac{\partial k_{\sigma_\tau, 1}}{\partial \bar{\bx}_{\sigma_t}}\right)+h_{\sigma_{\tau}}S(\delta(\sigma_{\tau}), l_{{\sigma_{\tau}},1})\notag\\
        &\qquad + h_{\sigma_{\tau}}^2 S\left(\frac{\partial k_{\sigma_\tau, 1}}{\partial \bar{\bx}_{\sigma_t}},l_{{\sigma_{\tau}},1} \right)\notag\\
        &\overset{(a)}{=} h_{\sigma_{\tau}} S\left(\Lambda_{{\sigma_{\tau}},i} , \frac{\partial k_{\sigma_\tau, 1}}{\partial \bar{\bx}_{\sigma_t}}\right) \notag\\
        &\qquad+ h_{\sigma_{\tau}}S\left(\frac{\partial \bar{X}_{\sigma_\tau, 1}}{\partial \bar{\bx}_{\sigma_t}} -h_{\sigma_{\tau}} \frac{\partial k_{\sigma_\tau, 1}}{\partial \bar{\bx}_{\sigma_t}}, l_{{\sigma_{\tau}},1}\right) \notag\\
        &\qquad + h_{\sigma_{\tau}}^2 S\left(\frac{\partial k_{\sigma_\tau, 1}}{\partial \bar{\bx}_{\sigma_t}},l_{{\sigma_{\tau}},1} \right)\notag\\
        &=0,\label{eqn:thm1}
    \end{align}
    where the first term of (a) is based on~\eqref{eqn:thm3} and the second term of (a) is based on~\eqref{eqn:thm2}. Thus, we have
    \begin{align}
    \label{eqn:thm2}
        \lambda(\sigma_{n})^T\delta(\sigma_{n}) \overset{(a)}{=}\lambda(\sigma_{0})^T\delta(\sigma_{0})\overset{(b)}{=}\lambda(\sigma_{t})^T\delta(\sigma_{t}),
    \end{align}
    where $(a)$ is based on~\eqref{eqn:thm1} and (b) is based on our assumption that $S(\delta,\lambda)$ is conserved for the continuous-time system. Then based on~\eqref{eqn:thm2} and $\bar{\bx}'_{\sigma_n}=\bar{\bx}'_{\sigma_t}$, we have 
    \begin{align*}
        \frac{\partial L(\bar{\bx}'_{\sigma_{0}})}{\partial \bar{\bx}'_{\sigma_{n}}} &=\frac{\partial L(\bar{\bx}'_{\sigma_{0}})}{\partial \bar{\bx}'_{\sigma_{0}}} \frac{\partial \bar{\bx}'_{\sigma_{0}}}{\partial \bar{\bx}'_{\sigma_{n}}} \\
        &= \lambda(\sigma_{0})^T\delta(\sigma_{0})\\
        &=\lambda(\sigma_{t})^T\delta(\sigma_{t})\\
        &=\frac{\partial L(\bar{\bx}'_{\sigma_{0}})}{\partial \bar{\bx}'_{\sigma_{t}}},
    \end{align*}
    which proves our theorem.
\end{proof}

\section{Experimental Details on Guided Sampling in Image Generation}
\label{app:exp}

\subsection{Style-Guided Generation}
\label{subapp:style}

In this section, we introduce the experimental details of style-guided sampling. Let sampling times be $T=100$. We set to do SAG from sampling steps $t=70$ to $t=31$ and the repeats time from $70$ and $61$ as 1 and from $60$ to $31$ as 2. For quantitative results, we select five style images and choose four prompts: \emph{["A cat wearing glasses.",  "A fantasy photo of volcanoes.", "A photo of an Eiffel Tower.",  "butterfly"]} to generate five images per prompt per style. 
For the implementation of Universal Guidance~\cite{Bansal_2023_CVPR} and FreeDOM~\cite{Yu2023freedom}, we use the officially released codes and generate the results for quantitative comparison under sample style and prompts. Besides, the hyperparameter choice for these two models also follows the official implementations. More qualitative results are shown in Fig.~\ref{figapp:style}.

\subsection{Aesthetic Improvement}
\label{subapp:aes}

When we improve the aesthetics of generated images, we use the weighted losses for LAION aesthetic predictor,\footnote{\url{https://github.com/LAION-AI/aesthetic-predictor.git}} PickScore~\cite{kirstain2023pick} and HPSv2~\cite{wu2023human}. We set the weights for each aesthetic evaluation model as \emph{PickScore = 10, HPSv2 = 2, Aesthetic = 0.5}. Let sampling times be $T=100$. We set to do SAG from sampling steps $t=70$ to $t=31$ and the repeats time from $70$ and $41$ as 2 and from $40$ to $31$ as 1. More qualitative results are shown in Fig.~\ref{figapp:aes}.

\subsection{Personalization}
\label{subapp:person}

For personalization in the object-guided generation, we do training-free guidance from steps $t=100$ to $t=31$ and we set the repeat times as 2. We randomly select four reference dog images and select four prompts: \emph{A dog (at the Acropolis/swimming/in a
bucket/wearing sunglasses)}. We generate four images per prompt per image to measure the quantitative results. For the results of DOODL, we directly use the results in the paper~\cite{wallace2023endtoend}. For the results of FreeDOM, we use the special case of our model when we set $n=1$. Let sampling times be $T=100$. We set to do SAG from sampling steps $t=100$ to $t=31$ and the repeats time from $100$ and $31$ as 2. More qualitative results are shown in Fig.~\ref{figapp:object}.

\subsection{Ablation Study}
\label{subapp:abla}

\paragraph{Analyses on Memory and Time Consumption}

We conducted our experiments on a V100 GPU. Memory consumption using SAG was observed to be 15.66GB, compared to 15.64GB when employing ordinary adjoint guidance. Notably, direct gradient backpropagation at $n=2$ resulted in a significantly higher memory usage of 28.63GB. Furthermore, as $n$ increases, the memory requirement for direct backpropagation shows a corresponding increase. In contrast, when using SAG, the memory consumption remains nearly constant regardless of the value of $n$.

We also present the time consumption associated with a single step of SAG for varying values of $n$ in Fig.~\ref{figapp:time}. As $n$ increases, we observe a corresponding rise in time consumption. However, this increment in $n$ also results in a substantial reduction in loss as shown in Fig.~\ref{fig:n_loss}, indicating a trade-off between computational time and the quality of results. 
\begin{figure}
    \centering
    \includegraphics[width=0.8\linewidth]{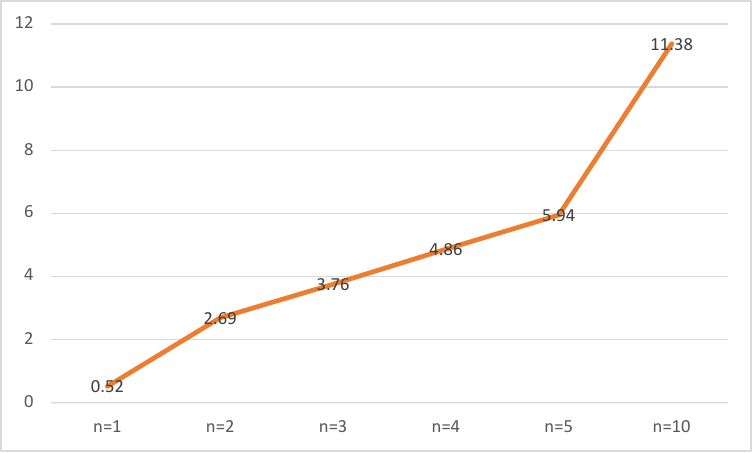}
    \caption{Time consumption of single step SAG (seconds)}
    \label{figapp:time}
\end{figure}

\paragraph{Choice of repeat times of time travel}

We show some results about the choice of repeat times in Fig.~\ref{figapp:repeat}. We find that increasing the repeat times helps the stylization. Besides, there still exists the distortion of images when $n=1$ even when we increase the repeat times. 
\begin{figure}
    \centering
    \includegraphics[width=\linewidth]{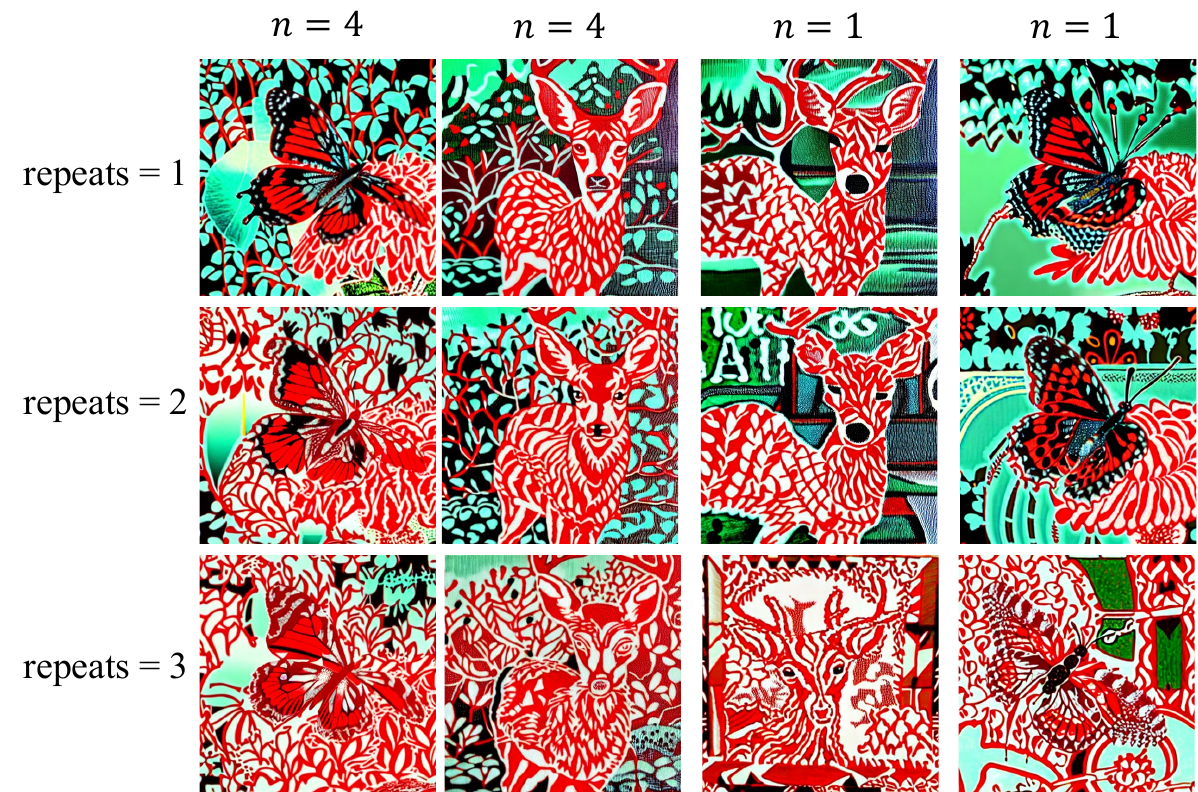}
    \caption{Stylization results when we use different repeat times of time travel.}
    \label{figapp:repeat}
\end{figure}

\paragraph{Choice of guidance steps}
We present the qualitative results regarding the selection of guidance steps in Fig.~\ref{figapp:start}. We can observe that initiating guidance in the early stages (i.e., the chaotic stage) results in final outputs that differ from those generated without guidance. Besides, starting guidance in the semantic stage allows us to maintain the integrity of the original images while effectively achieving style transfer. 
\begin{figure}
    \centering
    \includegraphics[width=0.8\linewidth]{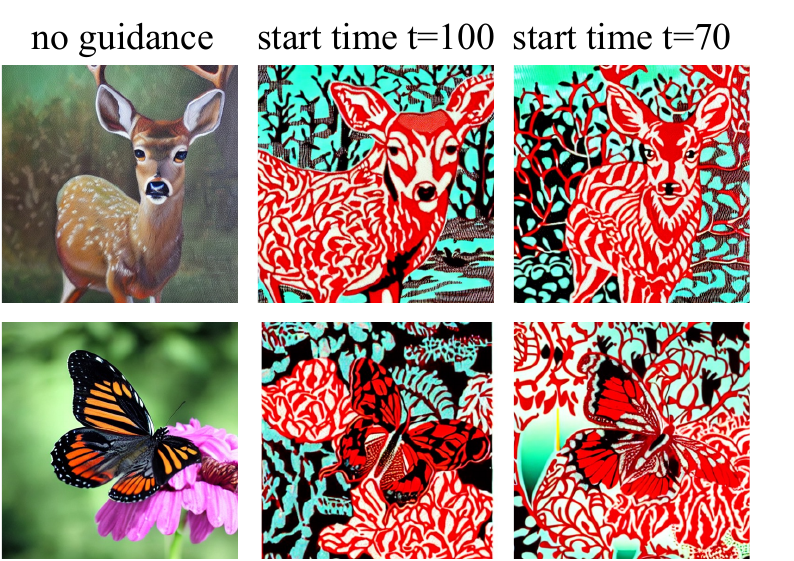}
    \caption{Stylization results when we start to do guidance at different time steps.}
    \label{figapp:start}
\end{figure}

\section{More examples on Video Stylization}
\label{app:video}
Two more groups of results of style-guided video stylization are shown in Figure~\ref{fig:apdx-video}.
\begin{figure}[h!]
    \centering
    \includegraphics[width=\linewidth]{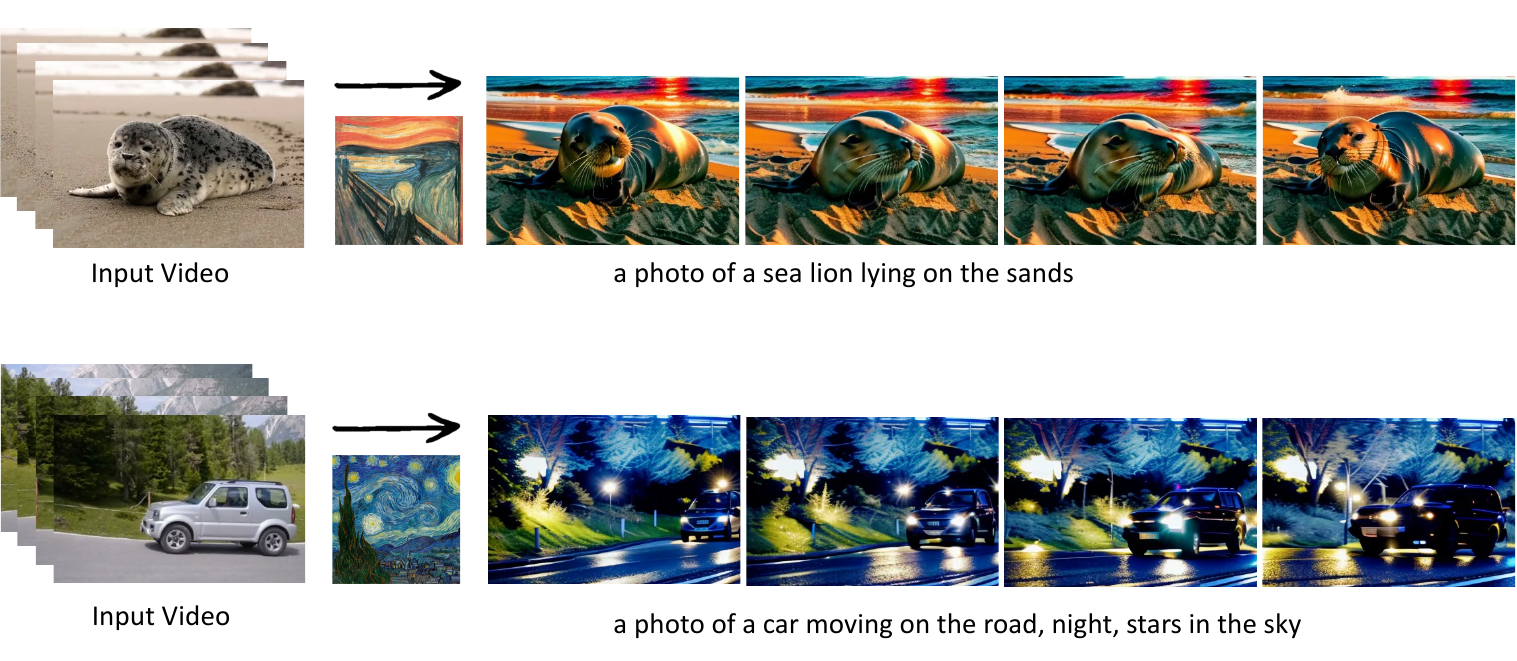}
    \caption{Examples on Video Stylization.}
    \label{fig:apdx-video}
\end{figure}

\section{Additional Qualitative Results on Image Generation}
\label{app:qual}
\begin{figure}
    \centering
    \includegraphics[width=\linewidth]{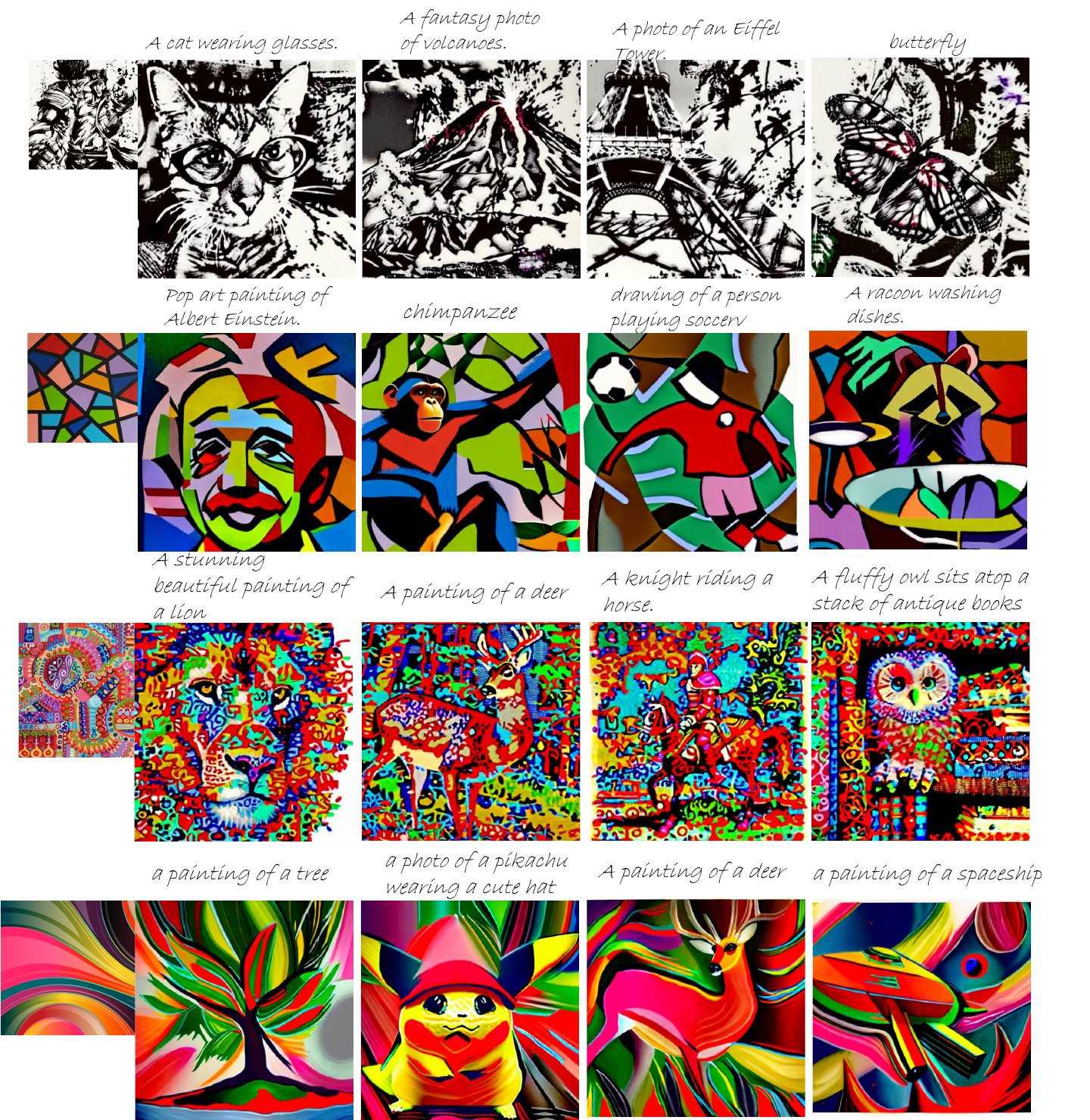}
    \caption{More examples of style-guided generation.}
    \label{figapp:style}
\end{figure}

\begin{figure*}
    \centering
    \includegraphics[width=\linewidth]{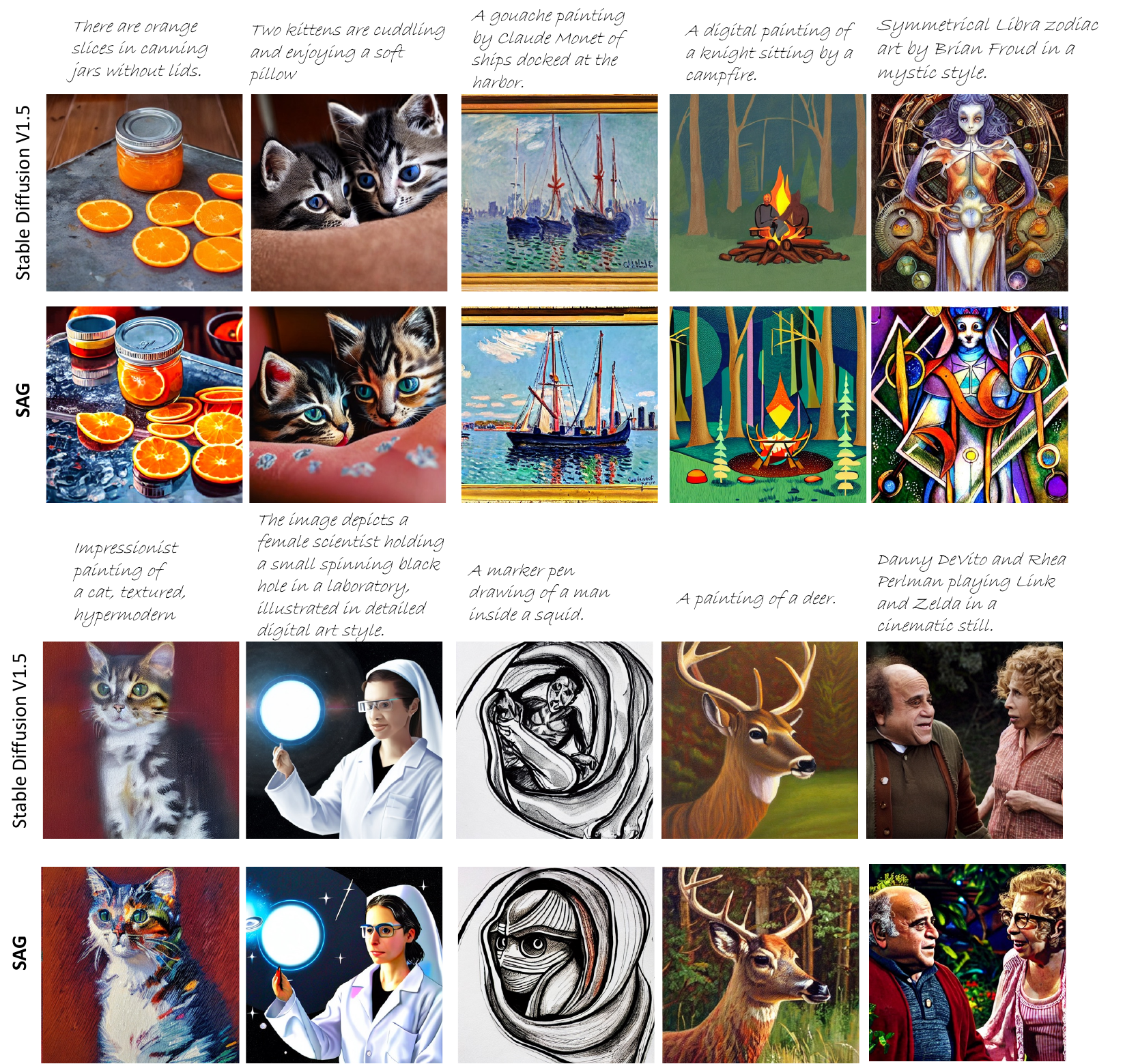}
    \caption{More examples of aesthetic improvements.}
    \label{figapp:aes}
\end{figure*}

\begin{figure*}
\centering
    \begin{subfigure}{0.49\textwidth}
        \includegraphics[width=\linewidth]{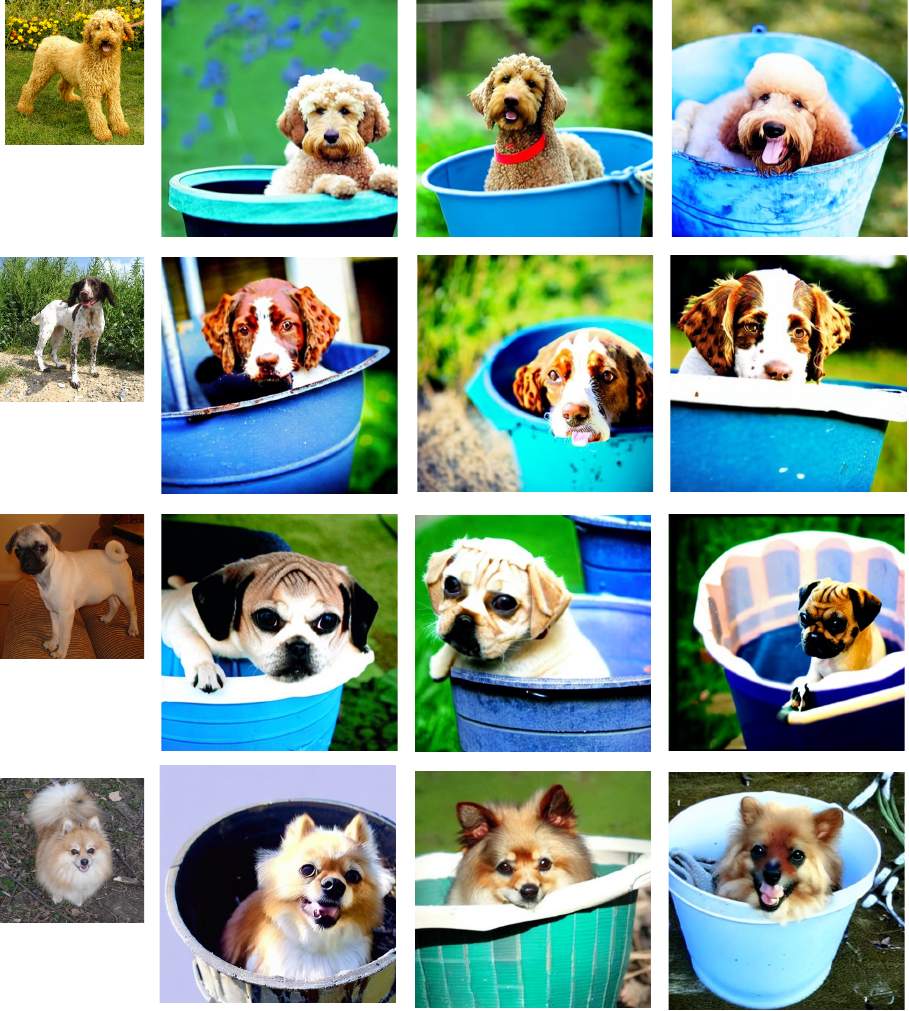}
        \caption{\emph{A dog in the bucket}.}
    \end{subfigure} 
    \begin{subfigure}{0.40\textwidth}
        \includegraphics[width=\linewidth]{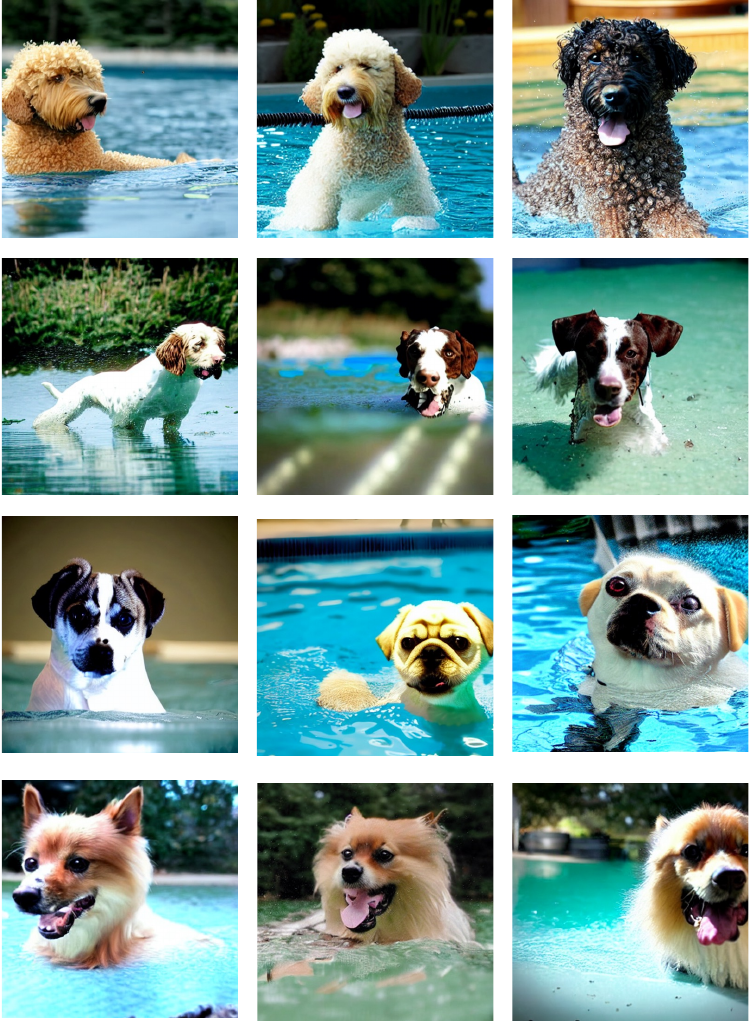}
        \caption{\emph{A dog swimming}.}
    \end{subfigure}

    \begin{subfigure}{0.49\textwidth}
        \includegraphics[width=\linewidth]{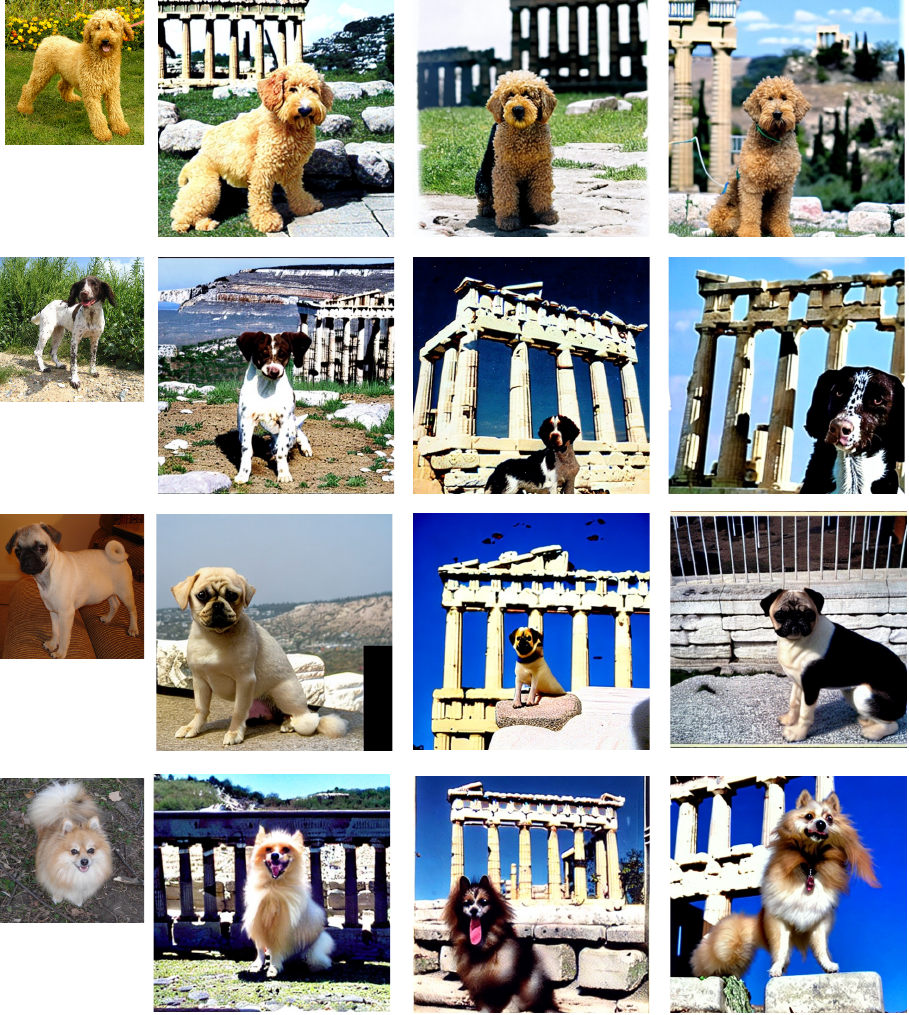}
        \caption{\emph{A dog at Acropolis}.}
    \end{subfigure} 
    \begin{subfigure}{0.40\textwidth}
        \includegraphics[width=\linewidth]{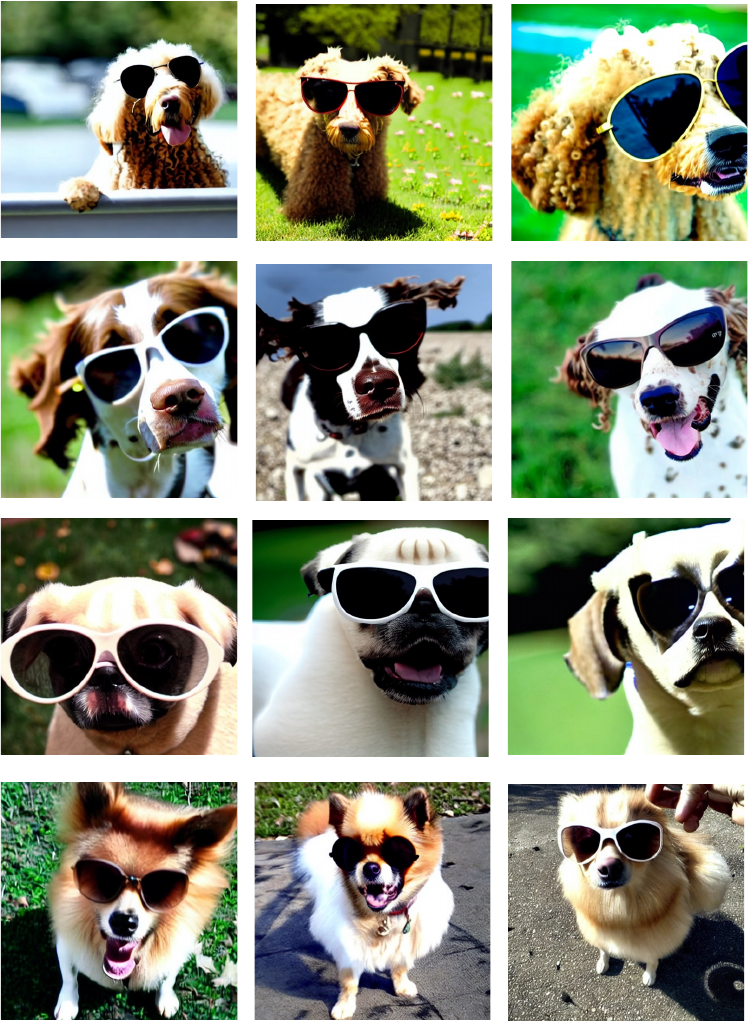}
        \caption{\emph{A dog wearing sunglasses}.}
    \end{subfigure}
    \caption{More examples on object-guided personalization.}
    \label{figapp:object}
\end{figure*}


\end{document}